\documentclass{article} 

\usepackage{iclr2027_conference}
\usepackage{times}

\usepackage{newtxtt} 
\usepackage[scaled=1]{biolinum} 
\usepackage{graphicx}
\usepackage{booktabs} 
\usepackage[font={footnotesize}]{caption}
\usepackage{subcaption} 
\usepackage{float} 
\usepackage[utf8]{inputenc} 
\usepackage[T1]{fontenc}    
\usepackage[colorlinks=true, citecolor=Violet, linkcolor=red, urlcolor=magenta]{hyperref}       
\usepackage{url}            
\usepackage{booktabs}   

\usepackage{amsfonts}       
\usepackage{nicefrac}       
\usepackage{microtype}      
\usepackage{lipsum}
\usepackage{fancyhdr}       
\usepackage{wrapfig}
\usepackage{colortbl}
\usepackage[dvipsnames]{xcolor}
\usepackage{graphicx}    
\graphicspath{{media/}}     
\usepackage[most]{tcolorbox}
\usepackage{alltt}
\usepackage{enumitem}
\usepackage{bm}
\usepackage{bbm}
\usepackage{tikz}
\usetikzlibrary{arrows.meta}

\usepackage{amsmath}
\usepackage{amssymb}
\usepackage{mathtools}
\usepackage{amsthm}
\usepackage{amsfonts} 
\usepackage{thmtools, thm-restate} 

\usepackage[capitalize,noabbrev]{cleveref}

\usepackage[vlined,ruled,linesnumbered]{algorithm2e}
\usepackage{adjustbox}
\usepackage{multirow,mathtools}
\renewcommand{\arraystretch}{1.15}

\usepackage{etoc}

\usepackage{threeparttable}
\usepackage{arydshln} 

\theoremstyle{plain}
\newtheorem{theorem}{Theorem}[section]

\theoremstyle{definition}
\newtheorem{definition}[theorem]{Definition}

\theoremstyle{remark}

\newtcbox{\hlprimarytab}{on line, box align=base, colback=BlueGreen!20,colframe=blue,size=fbox,arc=3pt, before upper=\strut, top=-2.5pt, bottom=-4.5pt, left=-2pt, right=-2pt, boxrule=0pt}
\newtcbox{\hlsecondarytab}{on line, box align=base, colback=WildStrawberry!10,colframe=orange,size=fbox,arc=3pt, before upper=\strut, top=-2.5pt, bottom=-4.5pt, left=-2pt, right=-2pt, boxrule=0pt}

\newcommand{\refs}[2]{\hyperref[#1]{\ref*{#1}#2}}

\newcommand{\tofu}{\texttt{TOFU}\xspace}
\newcommand{\muse}{\texttt{MUSE}\xspace}

\newcommand{\wmdp}{\texttt{WMDP}\xspace}

\newcommand{\openunlearning}{\texttt{OpenUnlearning}\xspace}

\newtcolorbox[auto counter, number within=section]{mydefinition}[2][]{%
  enhanced,
  breakable,
  colback=green!1,
  colframe=green!40!black,
  colbacktitle=green!10!white,
  coltitle=black,
  fonttitle=\bfseries\small,
  fontupper=\small,
  title={Definition~\thetcbcounter: #2},
  label={def:#2},
  attach boxed title to top left={yshift=-3.25mm, xshift=2mm},
  boxed title style={colframe=green!40!black},
  left=10pt,
  right=10pt,
  top=9pt,
  bottom=3pt,
  arc=1.5pt,
  drop shadow=black!25,
  #1
}

\newtcolorbox[auto counter, number within=section]{mythm}[2][]{%
  enhanced,
  breakable,
  colback=cyan!1, 
  colframe=cyan!40!black, 
  colbacktitle=cyan!10!white, 
  coltitle=black, 
  fonttitle=\bfseries\small,
  fontupper=\small, 
  title={Theorem~\thetcbcounter: #2}, 
  label={thm:#2},
  attach boxed title to top left={yshift=-3.25mm, xshift=2mm},
  boxed title style={colframe=cyan!35!black}, 
  left=10pt, 
  right=10pt, 
  top=9pt, 
  bottom=3pt, 
  arc=1.5pt, 
  drop shadow=black!25, 
  #1
}

\usepackage{comment} 

\newcommand{\xbf}{{\mathbf x}}

\newcommand{\ybf}{{\mathbf y}}

\newcommand{\btheta}{{\bm{\theta}}}

\definecolor{gradblue}{RGB}{31,119,180}
\definecolor{noiseorange}{RGB}{214,96,27}
\definecolor{coveragepurple}{RGB}{117,107,177}
\definecolor{controlgreen}{RGB}{27,158,119}

\title{Preemptive LLM Unlearning against Forbidden Capability Acquisition via Gradient Sealing}

\author{Kemou Li$^{1}$ \hspace{0.2cm} 
Qizhou Wang$^{2}$ \hspace{0.2cm}
Yue Wang$^{3}$ \hspace{0.2cm}
Fengpeng Li$^{4}$ \hspace{0.2cm}
Zhuan Shi$^{5, 6}$ \\
\textbf{Negar Rostamzadeh}$^{5, 6, 7}$ \hspace{0.2cm} 
\textbf{Golnoosh Farnadi}$^{5, 6}$ \hspace{0.2cm} 
\textbf{Masashi Sugiyama}$^{2, 8}$ \hspace{0.2cm}
\textbf{Jiantao Zhou}$^1$\vspace{1.5mm}\\
$^1$State Key Laboratory of Internet of Things for Smart City, University of Macau\\
$^2$RIKEN Center for Advanced Intelligence Project \hspace{0.25cm} 
$^3$The University of Melbourne \hspace{0.25cm} \\
$^4$King Abdullah University of Science and Technology \hspace{0.25cm}  
$^5$Mila -- Québec AI Institute \\
$^6$McGill University \hspace{0.25cm}
$^7$Google Research \hspace{0.25cm}
$^8$The University of Tokyo
}

\iclrfinalcopy 
\begin{document}

\etocdepthtag.toc{mtchapter}
\maketitle

\begin{abstract}
Open-weight LLMs are released not only as fixed products but also as substrates for downstream fine-tuning. 
This openness, however, creates legal and ethical risks because users may misuse fine-tuning to instill illicit knowledge or enable hostile operations.
Model providers therefore need apre-release defense against such acquisition, motivating the problem of \emph{preemptive unlearning}. 
Unlike \emph{retrospective unlearning}, which removes capabilities already present in a fixed model, preemptive unlearning seeks to prevent their acquisition under unseen attack data and future fine-tuning procedures.
Despite its practical importance, this setting remains largely unexplored, presents distinct challenges, and is therefore the central focus of our work. 
We first verify that existing retrospective methods provide insufficient pre-release protection.
Even when forbidden capabilities are suppressed in current outputs, forbidden-domain data can still induce gradients through internal pathways, enabling later acquisition. 
Motivated by this finding, we propose a \emph{gradient-sealing principle} that blocks these pathways by pushing relevant pre-activations into the negative region, where ReLU-family activations exhibit zero or near-zero derivatives. 
Experiments across multiple LLM families demonstrate our stronger resistance to downstream acquisition than retrospective baselines, validating gradient sealing as an effective mechanism for pre-release protection.
\end{abstract}

\section{Introduction}
Open-weight large language models (LLMs)~\citep{jiang2023mistral,dubey2024llama,gemma2025gemma3,yang2025qwen3} have become central to modern AI research and practice. 
Their accessible parameters enable cost-effective, privacy-preserving deployment and domain-specific adaptation with modest compute.
However, the same openness creates legal and ethical risks, as malicious users may fine-tune released models on harmful data to acquire capabilities for disinformation, explicit content generation, dangerous operational guidance, or privacy violations~\citep{qi2024finetuning,rosati2024representation,kaunismaa2026eliciting}, threatening both individuals and society.
Model providers therefore need a \emph{pre-release defense} that designates a \emph{forbidden domain} and makes the corresponding capabilities resistant to acquisition after release.
This motivates a critical yet underexplored question:
\begin{center}
\vspace{-0.15mm}
\textit{Can we precondition a model to resist the acquisition of\\ forbidden-domain capabilities through downstream fine-tuning?}
\end{center}

A closely related line of research is LLM unlearning~\citep{yao2024large,li2024wmdp,zhang2024negative}. 
Given a designated forget set, unlearning methods modify model parameters to suppress content already learned, as shown in Fig.~\refs{fig:pipeline}{a}~\citep{jang2023knowledge}. 
However, these methods are inherently \textit{retrospective}. 
Our pre-release defense is instead \textit{preemptive}: rather than removing knowledge already embedded in the model, it aims to make forbidden-domain capabilities resistant to acquisition through downstream fine-tuning, as shown in Fig.~\refs{fig:pipeline}{b}. 
Preemptive unlearning remains largely underexplored and particularly challenging because users have full access to released model parameters, making external  guardrails~\citep{inan2023llama,han2024wildguard} extremely difficult to enforce.

We further distinguish {retrospective} from {preemptive} unlearning by showing why retrospective methods provide insufficient pre-release protection. 
The key insight is that existing retrospective methods typically suppress current outputs while leaving the underlying gradients unconstrained~\citep{fan2025towards}.
Because downstream fine-tuning updates model parameters through these gradients, such open gradient pathways can still enable the acquisition of forbidden-domain capabilities. 
Our theoretical analysis in \S\ref{sec:theoretical_motivation} formalizes this intuition, showing that forbidden-domain data can induce gradients along domain-sensitive pathways even when the released model exhibits little forbidden-domain capability. 
Empirical results in \S\ref{sec:empirical_motivation} further show that low release capability can hide high acquisition susceptibility, while proxy look-ahead can locate and targeted attenuation can weaken those responsible pathways.
Together, these findings reinforce known robustness limitations of retrospective unlearning~\citep{wang2025invariance,lang2026downgrade}.

\begin{figure}
    \centering
    \includegraphics[width=\linewidth]{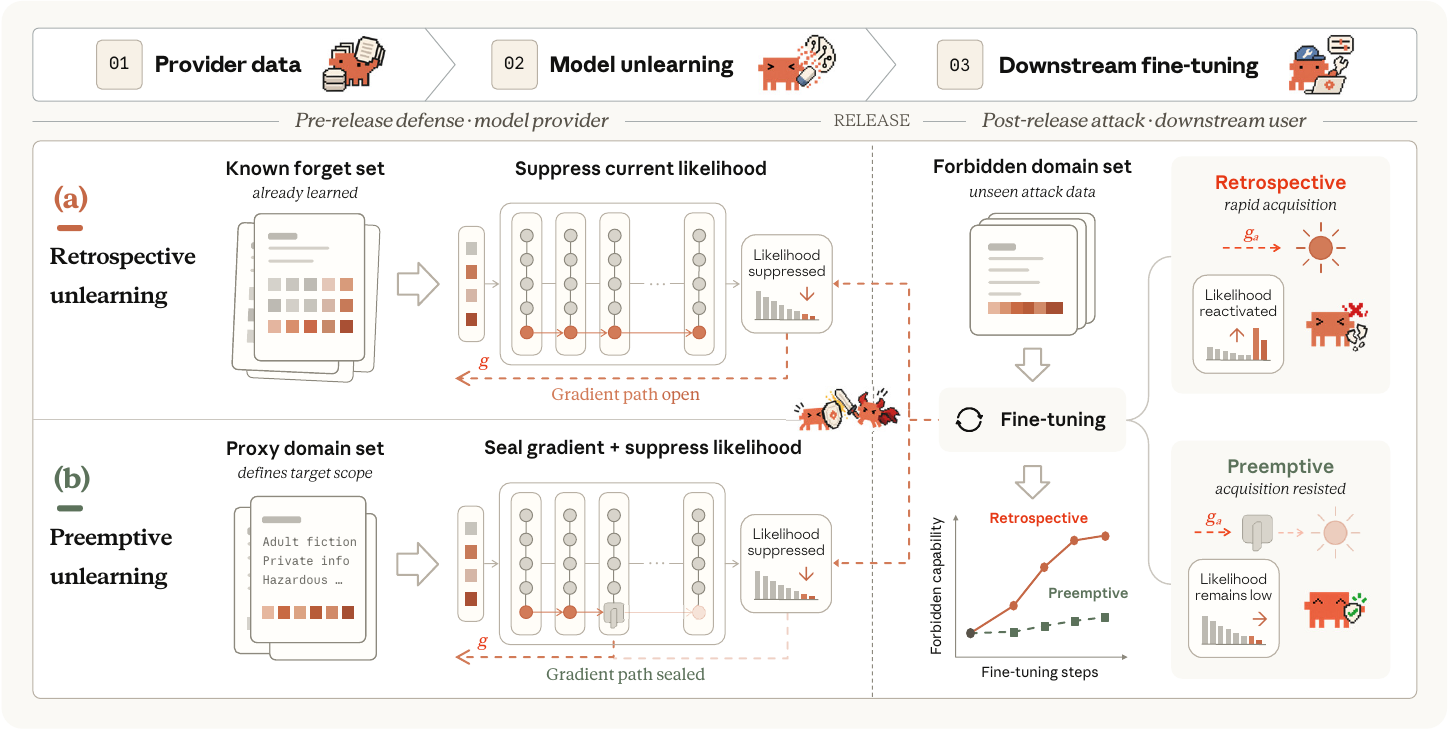}
    \caption{\textbf{Retrospective unlearning vs. preemptive unlearning.}
(a) Retrospective unlearning suppresses an already learned capability but may leave its gradient pathway open.
(b) Preemptive unlearning instead uses a forbidden-domain proxy set before release to seal that pathway, preventing future acquisition from fine-tuning.}
    \label{fig:pipeline}
\end{figure}

These findings suggest that pre-release defense must seal acquisition pathways beyond output suppression.
Directly penalizing acquisition-gradient norms, however, requires costly second-order differentiation.
We instead exploit ReLU-family activation gates with zero or near-zero derivatives for sufficiently negative inputs.
Pushing selected pre-activations into this regime attenuates their local gradient factors using only first-order updates.
Accordingly, we propose \textbf{Gradient-Sealed Unlearning (GSU)}, which uses proxy learning on a disposable copy to identify acquisition-sensitive gates from upward pre-activation shifts under matched contexts.
Restarting from the original weights, GSU jointly optimizes a \emph{seal loss}, response suppression, and retain supervision.
The seal loss in Eq.~\eqref{eq:seal_loss} pushes selected forbidden-domain pre-activations below a shared negative-tail threshold, while the other terms suppress release-time responses and preserve normal utility.

To our knowledge, GSU represents one of the earliest systematic efforts to formulate and evaluate preemptive unlearning as a pre-release defense.
It may also help strengthen retrospective unlearning against post-unlearning attacks~\citep{hu2025unlearning}.
In \S\ref{sec:experiments}, we extensively evaluate GSU on \wmdp~\citep{li2024wmdp} and \tofu~\citep{maini2024tofu} across multiple LLM families, where it shows stronger resistance to downstream acquisition of forbidden capabilities than competitive retrospective unlearning baselines.
These results support GSU's gradient-sealing principle and activation-gating mechanism.

\section{Problem Statement}
\label{sec:formulation}

Open-weight LLMs face legal and ethical risks similar to those of closed-weight LLMs~\citep{bommasani2021opportunities,kapoor2024societal}. 
Releasing model parameters introduces an additional challenge: users can fine-tune a released model to acquire a designated forbidden capability, even after retrospective unlearning ~\citep{yao2024large,wang2025rethinking}, as discussed in \S\ref{sec:motivation}. 
This motivates our study of \emph{preemptive unlearning}, an equally important yet largely unexplored setting. 
Unlike retrospective unlearning, which removes capabilities already present in a model, preemptive unlearning conditions a model before release to resist the acquisition of forbidden capabilities through future fine-tuning.

Let \(\mathcal F\) denote the forbidden domain and \(\mathcal R\) the normal domain whose utility should be preserved, with corresponding distributions \(\mathbb P_{\mathcal F}\) and \(\mathbb P_{\mathcal R}\) over prompt--response pairs.
For a performance functional $\mathsf{Perf}$ and model parameters $\btheta$, define $\mathsf{Perf}_{\mathcal F}(\btheta) \coloneq \mathsf{Perf}(\btheta;\mathbb P_{\mathcal F})$ and $\mathsf{Perf}_{\mathcal R}(\btheta) \coloneq \mathsf{Perf}(\btheta;\mathbb P_{\mathcal R})$, where higher values indicate stronger forbidden-domain capability and normal-domain utility, respectively.
We formalize preemptive unlearning through the following attack--defense framework.

\subsection{Post-Release Attack}
After release, an attacker obtains white-box access to the released model parameters $\btheta_{\mathrm{rel}}$ and an unseen forbidden-domain dataset $\mathcal D_a\sim\mathbb P_{\mathcal F}^{\otimes n_a}$.
An attack $\mathcal{A}$ from a class $\mathfrak A$ produces $\btheta_{\mathrm{atk}}=\mathcal A(\btheta_{\mathrm{rel}};\mathcal D_a)$.
Within $\mathfrak{A}$, the attacker may choose the training objective, optimizer, schedule, and other configurations, with the goal of acquiring the forbidden capability while retaining normal utility, as formalized below.

\begin{definition}[Successful acquisition attack]
\label{def:successful_attack}
Given a maximum acceptable forbidden-capability threshold $\epsilon_f$ and a minimum usable-utility threshold $\epsilon_u$, an attack $\mathcal A\in\mathfrak A$ is successful against $\btheta_{\mathrm{rel}}$ if the resulting $\btheta_{\mathrm{atk}}$ satisfies both (i)~$\mathbb E[\mathsf{Perf}_{\mathcal F}(\btheta_{\mathrm{atk}})]>\epsilon_f$ and (ii)~$\mathbb E[\mathsf{Perf}_{\mathcal R}(\btheta_{\mathrm{atk}})]>\epsilon_u$.
The expectations are taken over the sampling of $\mathcal D_a$ and the internal randomness of $\mathcal A$.
\end{definition}

Def.~\ref{def:successful_attack} excludes attacks that acquire the forbidden capability only by severely degrading normal-domain utility.
In such cases, high forbidden-domain performance may result primarily from fitting $\mathcal D_a$ rather than from reusing information or capabilities retained in $\btheta_{\mathrm{rel}}$.
Such attacks therefore fall outside our intended scope and do not constitute a failure of the defense introduced below.

\subsection{Pre-Release Defense}
Before release, the defender has access to the original trained model parameters $\btheta_o$ and a forbidden-domain proxy set $\mathcal D_f\sim\mathbb P_{\mathcal F}^{\otimes n_f}$. 
Remaining agnostic to the future attack dataset \(\mathcal D_a\) and attack procedure \(\mathcal A\), we seek a reliable preemptive unlearning algorithm $\mathcal U$ that produces defense-enhanced parameters $\btheta_{\mathrm{rel}}=\mathcal U(\btheta_o;\mathcal D_f)$ resistant to every \(\mathcal A\in\mathfrak A\) applied using any relevant \(\mathcal D_a\).

\begin{definition}[Preemptive unlearning]
\label{def:preemptive_unlearning}
Fix an attack class $\mathfrak A$, thresholds $(\epsilon_f,\epsilon_u)$ from Def.~\ref{def:successful_attack}, and an allowed release-time utility degradation $\epsilon_r\geq0$ such that $\epsilon_u<\mathsf{Perf}_{\mathcal R}(\btheta_o)-\epsilon_r$.
A released model $\btheta_{\mathrm{rel}}$ is $(\epsilon_f,\epsilon_r,\epsilon_u)$-preemptively unlearned against $\mathfrak A$ w.r.t. $(\mathcal F,\mathcal R)$ if (i) no attack $\mathcal A\in\mathfrak A$ is successful against $\btheta_{\mathrm{rel}}$ under Def.~\ref{def:successful_attack}, and (ii) $\btheta_{\mathrm{rel}}$ satisfies $\mathsf{Perf}_{\mathcal R}(\btheta_{\mathrm{rel}})\geq\mathsf{Perf}_{\mathcal R}(\btheta_o)-\epsilon_r$.
\end{definition}

As with retrospective unlearning, a preemptive defense must preserve the model’s overall utility. 
Achieving robustness by substantially degrading normal performance is unacceptable, particularly because even modest performance gains often require considerable training time and compute.
We therefore seek a released model that both resists post-release attacks and retains strong utility in normal use. 
In the next section, we further explain, both theoretically and empirically, why retrospective unlearning methods fall short in our preemptive setting.

\section{Why Retrospective Unlearning Falls Short}
\label{sec:motivation}


To see why directly repurposing retrospective unlearning may fail to provide the pre-release defense defined in \S\ref{sec:formulation}, we first revisit what retrospective methods actually optimize. Their objectives suppress target behavior at the current parameters, whereas preemptive unlearning is evaluated after an attacker updates those parameters using unseen forbidden-domain data. Retrospective objectives therefore do not account for the gradient induced by post-release fine-tuning on such data. 
\S\ref{sec:theoretical_motivation} formalizes this mismatch, and \S\ref{sec:empirical_motivation} tests its predictions empirically.

\subsection{From Likelihood Suppression to Future Acquisition Gradients}
\label{sec:theoretical_motivation}

We first formalize how retrospective unlearning suppresses the likelihood of undesirable responses, and then show why this suppression alone does not guarantee resistance to future acquisition.

\textbf{Retrospective likelihood suppression.}
Given a prompt $\xbf$ and response $\ybf=(y^1,\ldots,y^{|\ybf|})$, an LLM with parameters $\btheta$ assigns the autoregressive likelihood $\pi_{\btheta}(\ybf\mid\xbf)=\prod_{i=1}^{|\ybf|}\pi_{\btheta}(y^i\mid\xbf,\ybf^{<i})$ and negative log-likelihood (NLL) $\ell_{\btheta}(\xbf,\ybf)=-\log\pi_{\btheta}(\ybf\mid\xbf)$.
Starting from an original model $\btheta_o$ that already contains the target behavior, retrospective unlearning uses a known forget set $\mathcal D_u$ and a retain set $\mathcal D_r$ to produce an unlearned model $\btheta_u$.
It seeks to lower $\pi_{\btheta_u}(\ybf\mid\xbf)$ for $(\xbf,\ybf)\in\mathcal D_u$ while preserving normal behavior on $\mathcal D_r$.
A broad class of methods can be written schematically as
\[
\min_{\btheta}\;
\mathcal L_{\mathrm{RU}}(\btheta)
=
\underbrace{\mathcal L_{\mathrm{sup}}(\btheta;\mathcal D_u)}_{\text{suppress target responses}}
+
\lambda_r
\underbrace{\mathcal L_{\mathrm{ret}}(\btheta;\mathcal D_r,\btheta_o)}_{\text{preserve normal behavior}},
\]
where $\lambda_r\geq0$ controls the forget--retain trade-off. 
Within this common framework, gradient ascent (GA)~\citep{jang2023knowledge,yao2024large} maximizes the forget-set NLL; 
GradDiff~\citep{maini2024tofu} and NPO~\citep{zhang2024negative} add retain- or reference-model constraints;
and RMU~\citep{li2024wmdp} performs representation-level misdirection.
Robustness-oriented work further uses latent adversarial training~\citep{sheshadri2025latent}, smoothness or invariance regularization~\citep{fan2025towards,wang2025invariance}, or optimizer simplification~\citep{lang2026downgrade}. 
Despite their different mechanisms, these methods are centered on suppressing a known target at release; a fuller review appears in \S\ref{sec:appx-related-works}.

\textbf{The missing post-release gradient.}
When a retrospective method is repurposed for pre-release defense, the proxy set $\mathcal D_f$ replaces $\mathcal D_u$, and the optimized model is released as $\btheta_{\mathrm{rel}}$.
The retrospective objective above may suppress forbidden responses at release, but it does not constrain the update induced by a fresh, unseen attacker set $\mathcal D_a$.
This is the central mismatch: likelihood suppression controls the model's current state at $\btheta_{\mathrm{rel}}$, whereas acquisition resistance depends on the directions in which downstream training can move that state.
To isolate this distinction, we theoretically show below that the post-release gradient is the key diagnostic quantity: it explains why retrospective suppression can fail and identifies what a pre-release defense must control beyond current outputs.


Following common malicious fine-tuning settings~\citep{tamirisa2025tamper}, we assume the attacker minimizes the supervised fine-tuning loss $\mathcal L_a(\btheta;\mathcal D_a) \coloneq \mathbb E_{(\xbf,\ybf)\sim\mathcal D_a} [\ell_{\btheta}(\xbf,\ybf)]$.
Our local analysis considers one gradient step $\btheta^{+}=\btheta_{\mathrm{rel}}-\eta\mathbf g_a$, where $\eta>0$ and $\mathbf g_a\coloneq\nabla_{\btheta}\mathcal L_a(\btheta_{\mathrm{rel}};\mathcal D_a)$.
Since forget-related behavior and learning signals can concentrate in localized pathways~\citep{cloud2024gradient}, consider the local orthogonal decomposition $\mathbb R^d=\mathcal S\oplus\mathcal S^\perp$, where $\mathcal S$ contains forbidden-sensitive directions and $\mathcal S^\perp$ is its orthogonal complement.
Let $\Pi_{\mathcal S}$ denote the orthogonal projector onto $\mathcal S$ and $S_{\mathcal F}$ a differentiable surrogate of $\mathsf{Perf}_{\mathcal F}$.
We adopt the idealized local-separation condition
$\Pi_{\mathcal S}\nabla_{\btheta}S_{\mathcal F}(\btheta_{\mathrm{rel}})
=\nabla_{\btheta}S_{\mathcal F}(\btheta_{\mathrm{rel}})$,
so directions in $\mathcal S^\perp$ have no first-order effect on the forbidden score.
The following proposition links one-step acquisition to the component of the attacker gradient in $\mathcal S$.
\begin{restatable}[Future acquisition gain bound; proof deferred to \S\ref{subsec:appx-proof-gradient}]{proposition}{prop}
\label{prop:future_acquisition_bound}
Suppose $S_{\mathcal F}$ is $L_{\mathcal F}$-Lipschitz and $\beta_{\mathcal F}$-smooth in a neighborhood of $\btheta_{\mathrm{rel}}$, with
$\Pi_{\mathcal S}\nabla_{\btheta}S_{\mathcal F}(\btheta_{\mathrm{rel}})
=\nabla_{\btheta}S_{\mathcal F}(\btheta_{\mathrm{rel}})$.
For fixed $\mathcal D_a$, define
$\Delta S_{\mathcal F}\coloneq S_{\mathcal F}(\btheta^{+})-S_{\mathcal F}(\btheta_{\mathrm{rel}})$.
For all sufficiently small $\eta>0$, this one-step acquisition gain satisfies:
\[
\Delta S_{\mathcal F}
=
-\eta
\left\langle
\nabla_{\btheta}S_{\mathcal F}(\btheta_{\mathrm{rel}}),
\mathbf g_a
\right\rangle
+O(\eta^2)
\leq
\eta L_{\mathcal F}
\left\|\Pi_{\mathcal S}\mathbf g_a\right\|_2
+O(\eta^2).
\]
\end{restatable}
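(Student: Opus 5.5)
The plan is a second-order Taylor expansion of $S_{\mathcal F}$ along the one-step update, followed by Cauchy--Schwarz applied only on the subspace $\mathcal S$.

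\textbf{Taylor expansion.} Since $S_{\mathcal F}$ is $\beta_{\mathcal F}$-smooth in a neighborhood $\mathcal N$ of $\btheta_{\mathrm{rel}}$, I will take $\eta$ small enough that the whole segment $\{\btheta_{\mathrm{rel}}-t\eta\mathbf g_a: t\in[0,1]\}$ lies in $\mathcal N$. This is possible because $\mathcal D_a$ is fixed, so $\mathbf g_a$ is a fixed finite vector. The standard descent-lemma bound for $\beta$-smooth functions, obtained from the fundamental theorem of calculus along the segment, then gives
\[
\left|S_{\mathcal F}(\btheta^{+})-S_{\mathcal F}(\btheta_{\mathrm{rel}})+\eta\left\langle\nabla_{\btheta}S_{\mathcal F}(\btheta_{\mathrm{rel}}),\mathbf g_a\right\rangle\right|\le \tfrac{\beta_{\mathcal F}}{2}\eta^2\|\mathbf g_a\|_2^2 .
\]
This establishes the equality in the statement, with the $O(\eta^2)$ term explicitly bounded by $\tfrac{\beta_{\mathcal F}}{2}\eta^2\|\mathbf g_a\|_2^2$.

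\textbf{Projection and Cauchy--Schwarz.} Next I bound the first-order term. By the local-separation condition, $\nabla_{\btheta}S_{\mathcal F}(\btheta_{\mathrm{rel}})=\Pi_{\mathcal S}\nabla_{\btheta}S_{\mathcal F}(\btheta_{\mathrm{rel}})$. Because $\Pi_{\mathcal S}$ is an orthogonal projector, it is self-adjoint and idempotent, so
\[
\left\langle\nabla_{\btheta}S_{\mathcal F}(\btheta_{\mathrm{rel}}),\mathbf g_a\right\rangle=\left\langle\nabla_{\btheta}S_{\mathcal F}(\btheta_{\mathrm{rel}}),\Pi_{\mathcal S}\mathbf g_a\right\rangle .
\]
In words, the $\mathcal S^\perp$ component of $\mathbf g_a$ drops out. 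Cauchy--Schwarz then gives
\[
-\eta\left\langle\nabla_{\btheta}S_{\mathcal F}(\btheta_{\mathrm{rel}}),\mathbf g_a\right\rangle\le \eta\|\nabla_{\btheta}S_{\mathcal F}(\btheta_{\mathrm{rel}})\|_2\,\|\Pi_{\mathcal S}\mathbf g_a\|_2 .
\]
Finally, $L_{\mathcal F}$-Lipschitzness on the open neighborhood implies $\|\nabla_{\btheta}S_{\mathcal F}(\btheta_{\mathrm{rel}})\|_2\le L_{\mathcal F}$. To see this, take a directional difference quotient in the direction of the gradient and let the step tend to zero. Combining these bounds with the Taylor step yields the stated inequality.

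\textbf{Main obstacle.} The only delicate point is bookkeeping rather than any real difficulty. The neighborhood in which smoothness holds must contain the step, and this is exactly what ``sufficiently small $\eta$'' provides. The constant inside $O(\eta^2)$ depends on $\|\mathbf g_a\|_2^2$, which is harmless because $\mathcal D_a$ is fixed.
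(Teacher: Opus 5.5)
Your proposal is correct and matches the paper's proof essentially step for step. The paper also restricts $\eta$ so that the segment stays in the neighborhood, bounds the remainder by $\tfrac{\beta_{\mathcal F}}{2}\eta^2\|\mathbf g_a\|_2^2$, obtains $\|\nabla_{\btheta}S_{\mathcal F}(\btheta_{\mathrm{rel}})\|_2\le L_{\mathcal F}$ from a directional difference quotient, and finishes with self-adjointness of $\Pi_{\mathcal S}$ and Cauchy--Schwarz (idempotence is not actually needed).
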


\textit{Remark.}
Prop.~\ref{prop:future_acquisition_bound} separates release-time suppression from susceptibility to future acquisition.
Retrospective suppression can make $S_{\mathcal F}(\btheta_{\mathrm{rel}})$ small while leaving $\|\Pi_{\mathcal S}\mathbf g_a\|_2$ large; reducing this norm tightens the first-order upper bound on how rapidly fresh data can increase the forbidden score.
This gap need not reflect a proxy--attacker data mismatch: even when $\mathcal D_a=\mathcal D_f$, likelihood suppression alone does not ensure a small projected gradient.
An example in \S\ref{subsec:appx-counterexample} makes this separation explicit: two released models have identical forbidden and retain outputs, yet their projected gradients and one-step gains differ by controllable factors.
Thus, gradient pathways can remain receptive to future acquisition after output suppression.
\S\ref{sec:empirical_motivation} next investigates these predictions empirically.


\subsection{Release-Time Suppression Leaves Future Acquisition Pathways Open}
\label{sec:empirical_motivation}

Prior studies show that suppressed capabilities can re-emerge after downstream fine-tuning~\citep{hu2025unlearning}.
Echoing these robustness concerns, Prop.~\ref{prop:future_acquisition_bound} highlights that a low release score need not imply a small acquisition gradient along forbidden-sensitive directions.
We examine this gap in knowledge-absent preemptive setting by release failure, pathway prediction, and targeted intervention.


\textbf{Common setup.}
On \tofu~\citep{maini2024tofu}, we fine-tune \texttt{Qwen3.5-2B}~\citep{qwen2026qwen35} on \texttt{retain90} to obtain a retain-only \emph{Reference}.
For each of 20 target authors, we allocate $4/8/8$ facts to $\mathcal D_f$, $\mathcal D_a$, and holdout, respectively, yielding row-disjoint, same-author splits.
Reference is never optimized on these target splits, isolating knowledge-absent preemptive unlearning: the defender uses $\mathcal D_f$ to identify and suppress pathways that disjoint $\mathcal D_a$ may later recruit.
Following \citet{wang2025towards} and \openunlearning~\citep{openunlearning2025}, we measure empirical forbidden capability using extraction strength (\textsf{ES} $\uparrow$), defining $\widehat{\mathsf{Perf}}_{\mathcal F}(\btheta;\mathcal D_a)\coloneq\textsf{ES}(\mathcal D_a)$.
Full protocols appear in \S\ref{subsec:appx-fig2-protocol}.
Fig.~\ref{fig:empirical-gradient-evidence} therefore moves from failure in Fig.~\refs{fig:empirical-gradient-evidence}{a}, to prediction in Fig.~\refs{fig:empirical-gradient-evidence}{b}, and to intervention in Fig.~\refs{fig:empirical-gradient-evidence}{c}.

\begin{figure*}[t]
\centering
\includegraphics[width=\textwidth]{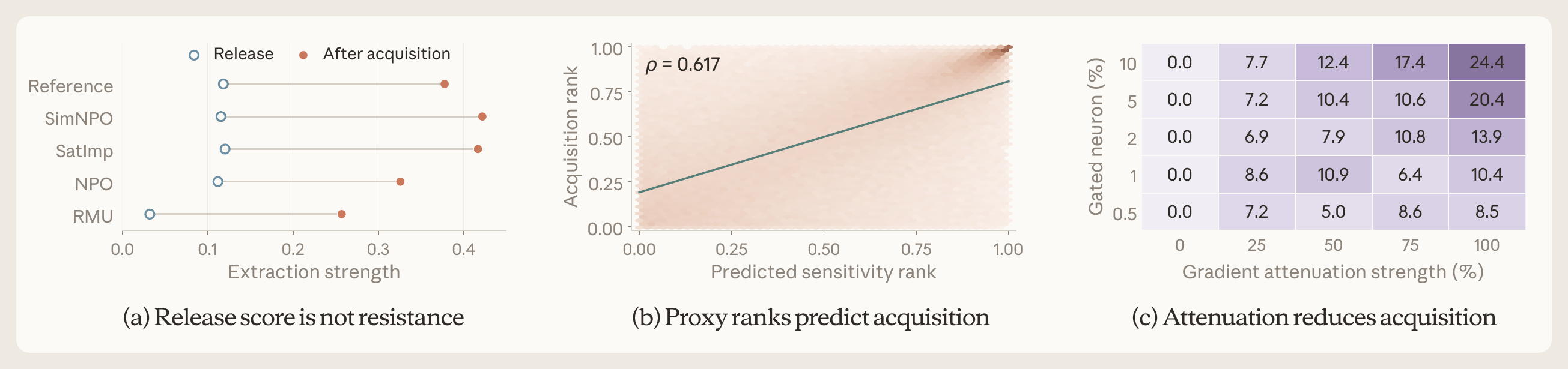}
\caption{\textbf{Empirical motivation for gradient sealing.}
(a) Release and post-acquisition \textsf{ES} for unlearning methods under identical acquisition.
(b) Within-layer channel-rank alignment between $\mathcal D_f$ look-ahead and $\mathcal D_a$ acquisition.
(c) Acquisition-gain reduction across selected-channel fractions and gradient-attenuation strengths.
}
\label{fig:empirical-gradient-evidence}
\end{figure*}


\textbf{Release-time suppression does not confer acquisition resistance.}
Starting from Reference, SimNPO~\citep{fan2025simplicity}, SatImp~\citep{yang2025exploring}, NPO, and RMU produce distinct release states on $\mathcal D_f$ retaining at least $95\%$ of Reference utility.
Fig.~\refs{fig:empirical-gradient-evidence}{a} connects release \textsf{ES} to its value after the same $\mathcal D_a$ acquisition.
Despite widely separated release scores, all method shows substantial acquisition: NPO rises from $0.112$ to $0.326$, and RMU from $0.032$ to $0.257$.
Although their endpoints remain lower, suppression does not prevent subsequent acquisition, consistent with Prop.~\ref{prop:future_acquisition_bound}.
Fig.~\refs{fig:empirical-gradient-evidence}{b} next asks where this residual trainability resides and whether $\mathcal D_f$ can expose it before $\mathcal D_a$ arrives.

\textbf{A proxy look-ahead reveals where future acquisition will flow.}
Let $\ell$ index an MLP layer, $j$ a channel, and $x\in\{f,a\}$ a data split; $\bar A^{s}_{x,\ell j}$ denotes that channel's mean answer-token activation on $\mathcal D_x$ in state $s$.
A short $\mathcal D_f$ look-ahead on a disposable copy defines
$\Delta^f_{\ell j}=|\bar A^{\mathrm{look}}_{f,\ell j}-\bar A^{0}_{f,\ell j}|$.
After resetting to the Reference, independent $\mathcal D_a$ acquisition defines
$\Delta^a_{\ell j}=|\bar A^{\mathrm{acq}}_{a,\ell j}-\bar A^{0}_{a,\ell j}|$.
Here $0$, $\mathrm{look}$, and $\mathrm{acq}$ denote the Reference, look-ahead, and post-acquisition states.
Fig.~\refs{fig:empirical-gradient-evidence}{b} plots their within-layer ranks on the horizontal and vertical axes, respectively, with darker hexagons containing more channels.
The dark upper-right mass thus shows that channels most sensitive to proxy learning also tend to change most during actual acquisition.
The pooled $\rho=0.617$, positive across all 24 layers,  establishes a consistent predictive alignment between proxy sensitivity and subsequent acquisition.
Fig.~\refs{fig:empirical-gradient-evidence}{c} next tests whether this alignment identifies a functional acquisition pathway rather than merely a correlate.

\textbf{Targeted attenuation converts prediction into control.}
Using only the $\Delta^f_{\ell j}$ ranking, we select the top $p\%$ channels per layer and scale their backward/update paths by $m=1-\alpha$ during $\mathcal D_a$ acquisition, while preserving the forward pass and matching per-step update norms to Full-FT.
Fig.~\refs{fig:empirical-gradient-evidence}{c} varies attenuation strength $\alpha$ horizontally and channel coverage $p$ vertically.
Each cell reports
$R(\alpha,p)=100(G_{\mathrm{FT}}-G_{\alpha,p})/G_{\mathrm{FT}}$,
where $G=\textsf{ES}_{\mathrm{after}}-\textsf{ES}_{\mathrm{before}}$ is the acquisition gain.
Moving right strengthens attenuation, moving upward controls more predicted channels, and darker cells indicate less acquisition than Full-FT.
The reduction generally grows toward the upper right, reaching $24.35\%$ at $p=10\%$ and $\alpha=100\%$; under full attenuation, targeted masks also outperform layer- and count-matched random masks at every coverage.
These interventions turn the predictive alignment in Fig.~\refs{fig:empirical-gradient-evidence}{b} into causal evidence that the localized pathways materially support acquisition.

\textbf{Takeaways.}
Taken together, Fig.~\ref{fig:empirical-gradient-evidence} forms a single chain: Fig.~\refs{fig:empirical-gradient-evidence}{a} identifies the failure of static suppression, Fig.~\refs{fig:empirical-gradient-evidence}{b} locates the residual acquisition route in advance, and Fig.~\refs{fig:empirical-gradient-evidence}{c} verifies that the localized route is a functional control point.
A pre-release defense should therefore control where future optimization can flow, not only what the model currently expresses.
Following this logic, GSU uses $\mathcal D_f$ to \emph{expose} acquisition-induced changes, \emph{localize} sensitive gates, and \emph{seal} their gradient pathways.
\S\ref{sec:method} next develops this full \emph{expose--localize--seal} defense.

\section{Gradient-Sealed Unlearning}
\label{sec:method}

Our findings in \S\ref{sec:motivation} motivate sealing acquisition pathways beyond suppressing release-time outputs.
A direct approach penalizes acquisition-gradient norms, but differentiating this penalty introduces second-order derivatives and computational overhead~\citep{zhao2022penalizing}.
GSU avoids this cost by controlling pre-activations, exploiting the zero or near-zero derivatives of ReLU-family activations for sufficiently negative inputs.
Pushing pre-activations into this regime therefore attenuates local gradient factors using only first-order updates.
Because indiscriminate control risks harming normal utility, we first identify acquisition-sensitive gates and restrict sealing to them.
As shown in Fig.~\ref{fig:method}, we \emph{expose} changes induced by proxy learning in \S\ref{subsec:method-expose}, \emph{localize} gates with upward pre-activation shifts in \S\ref{subsec:method-localize}, and \emph{seal} the selected gates jointly with response suppression and retain supervision in \S\ref{subsec:method-seal}.


\begin{figure*}[t]
    \centering
    \includegraphics[width=\textwidth]{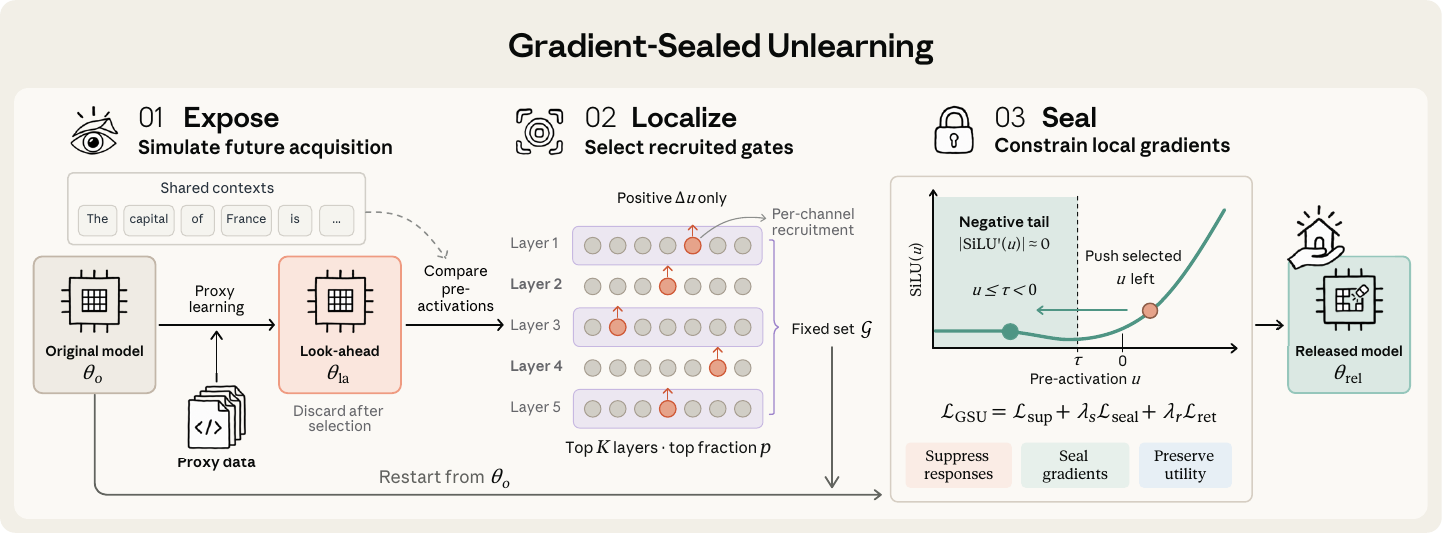}
\caption{\textbf{Overview of GSU.}
\emph{Expose} reveals learning-induced changes by comparing the original model with a disposable proxy-fitted copy under matched contexts.
\emph{Localize} selects a fixed gate set $\mathcal G$ based on upward pre-activation shifts.
\emph{Seal} restarts from $\btheta_o$ and pushes selected pre-activations into the SiLU negative tail to attenuate local gradient factors, alongside response suppression and retain supervision, yielding $\btheta_{\mathrm{rel}}$.
}
    \label{fig:method}
\end{figure*}

\subsection{Expose Acquisition-Induced Changes}
\label{subsec:method-expose}

\textbf{Learning as a diagnostic.}
Under the formulation in \S\ref{sec:formulation}, $\mathcal D_f$ and $\mathcal D_a$ are sampled from the same forbidden-domain distribution $\mathbb P_{\mathcal F}$ but need not overlap.
Since $\mathcal D_a$ is unavailable before release, we use proxy learning on $\mathcal D_f$ to expose gates that unseen acquisition may also recruit.
Starting from $\btheta_o$, we briefly fine-tune a disposable copy using the supervised objective analyzed in \S\ref{sec:theoretical_motivation}:
\begin{equation}
\min\nolimits_{\btheta}\;
\mathbb E_{(\xbf,\ybf)\sim\mathcal D_f}
\bigl[
\ell_{\btheta}(\xbf,\ybf)
\bigr].
\label{eq:gsu-lookahead}
\end{equation}
Let $\btheta_{\mathrm{la}}$ denote the resulting look-ahead parameters.
To determine which gates to target during sealing, we examine how their pre-activations respond to proxy learning.
We therefore compare $\btheta_o$ and $\btheta_{\mathrm{la}}$ on the same proxy examples under matched teacher-forcing contexts: both models receive $(\xbf,\ybf^{<i})$ when predicting $y^i$.
For each such context, we write $u_{\ell j}^{i}(\btheta)\coloneqq u_{\ell j}^{i}(\xbf,\ybf^{<i};\btheta)$ for the gate pre-activation of channel $j$ in MLP layer $\ell$ used to predict $y^i$.
Matching the contexts isolates learning-induced pre-activation changes from differences in generated continuations, providing a consistent signal for selecting acquisition-sensitive gates in the next localization stage.

\subsection{Localize Directionally Recruited Gates}
\label{subsec:method-localize}

\textbf{Directional recruitment.}
To identify which gates to seal, we consider not only how much their pre-activations change, but also whether the changes oppose the intended intervention.
Sealing aims to push pre-activations into sufficiently negative regions, where ReLU and smooth variants such as SiLU~\citep{elfwing2017sigmoid} have zero or near-zero derivatives.
We instantiate this principle with the SiLU gates in the SwiGLU~\citep{shazeer2020glu} MLPs of Llama and Qwen.
Accordingly, we focus on gates whose pre-activations increase under proxy learning, counteracting the downward shift targeted by sealing.
Their directional recruitment score is defined by
\begin{equation}
r_{\ell j}
\coloneqq
\mathbb E_{(\xbf,\ybf)\sim\mathcal D_f}
\biggl[
\frac{1}{|\ybf|}
\sum\nolimits_{i=1}^{|\ybf|}
\operatorname{ReLU}
\bigl(
u_{\ell j}^{i}(\btheta_{\mathrm{la}})
-
u_{\ell j}^{i}(\btheta_o)
\bigr)
\biggr].
\label{eq:unit-score}
\end{equation}
Applying ReLU before averaging prevents decreases at other positions from canceling increases.
The score captures upward shifts even when both pre-activations remain negative.
We next use these scores to select gates for sealing under layer and channel budgets.

\textbf{Hierarchical selection.}
To compare recruitment across layers without favoring larger numerical scales, we normalize each layer's total score by its original pre-activation magnitude.
For a layer $\ell$ of width $d_\ell$, define its magnitude $m_\ell$ and relative recruitment score $R_\ell$ as
\begin{equation}
m_\ell
\coloneqq
\mathbb E_{(\xbf,\ybf)\sim\mathcal D_f}
\biggl[
\frac{1}{|\ybf|}
\sum\nolimits_{i=1}^{|\ybf|}
\sum\nolimits_{j=1}^{d_\ell}
\bigl|u_{\ell j}^{i}(\btheta_o)\bigr|
\biggr],
\qquad
R_\ell
\coloneqq
\frac{1}{m_\ell}
\sum\nolimits_{j=1}^{d_\ell}r_{\ell j}.
\label{eq:gsu-layer-recruitment}
\end{equation}
The $K$ layers with the highest positive $R_\ell$ among those with $m_\ell>0$ form $\mathcal I^\star$.
Within each selected layer, the $\lceil p d_\ell\rceil$ channels with the highest positive $r_{\ell j}$ form $\mathcal N_\ell^\star$, where $p\in(0,1]$ controls the selected fraction.
The resulting layer--channel index set
$\mathcal G\coloneqq
\{(\ell,j):\ell\in\mathcal I^\star,\ j\in\mathcal N_\ell^\star\}$
identifies the gates selected for intervention, enabling us to turn localization into gradient control in the next stage.

\subsection{Seal Selected Gradient Pathways}
\label{subsec:method-seal}
\label{subsec:method-gates}

With $\mathcal G$ fixed, we discard $\btheta_{\mathrm{la}}$ and restart full-parameter optimization from $\btheta_o$.
Proxy learning therefore contributes only the gate selection, not the fitted parameter changes.
We now constrain the selected pre-activations to induce local gradient attenuation in the released model.

\textbf{Sealing local gradients.}
For gate weights $\mathbf w$ and MLP input $\mathbf h$, let $u=\mathbf w^\top\mathbf h$.
By the chain rule, the gradient contribution through this gate is $c\,\phi'(u)\mathbf h$, where $\phi$ is SiLU and $c$ combines the parallel-branch activation with the downstream gradient.
Since $|\phi'(u)|\to0$ as $u\to-\infty$, pushing $u$ sufficiently far into the negative tail makes this local derivative factor small, as illustrated in Fig.~\ref{fig:method}.
This provides a first-order mechanism for gradient control through pre-activations.
We therefore choose a shared threshold $\tau<0$ in this tail and softly penalize violations of $u_{\ell j}^{i}(\btheta)\leq\tau$:
\begin{equation}
\mathcal L_{\mathrm{seal}}(\btheta;\mathcal D_f)
=
\mathbb E_{\substack{
(\xbf,\ybf)\sim\mathcal D_f\\
(\ell,j)\sim\mathcal G
}}
\biggl[
\frac{1}{|\ybf|}
\sum\nolimits_{i=1}^{|\ybf|}
\operatorname{ReLU}^2
\bigl(u_{\ell j}^{i}(\btheta)-\tau\bigr)
\biggr].
\label{eq:seal_loss}
\end{equation}
The shared threshold $\tau$ translates the small-derivative requirement into a common target in the negative tail.
For each selected pre-activation $u$, the scalar penalty has derivative $2\operatorname{ReLU}(u-\tau)$ with respect to $u$, providing a downward signal proportional to the threshold violation.
Once $u\le\tau$, this contribution vanishes, so the seal term exerts no further direct downward pressure.
The loss thus encourages entry into the low-derivative region through a soft constraint rather than hard clipping, allowing sealing to be balanced with utility preservation.

\textbf{Coupling pathways and behavior.}
Sealing targets local gradient factors that output suppression does not explicitly constrain.
To also suppress release-time forbidden responses and preserve normal-domain utility, we combine it with response suppression and retain supervision.
The resulting GSU objective extends the suppression--retention formulation in \S\ref{sec:theoretical_motivation}, yielding
\begin{equation}
\mathcal L_{\mathrm{GSU}}(\btheta;\mathcal D_f,\mathcal D_r)
=
\mathcal L_{\mathrm{sup}}(\btheta;\mathcal D_f)
+
\lambda_s\mathcal L_{\mathrm{seal}}(\btheta;\mathcal D_f)
+
\lambda_r\mathcal L_{\mathrm{ret}}(\btheta;\mathcal D_r),
\label{eq:gsu-objective}
\end{equation}
where $\lambda_s,\lambda_r\geq0$ control the strength of sealing and retention.
We optimize this joint objective with first-order updates to obtain $\btheta_{\mathrm{rel}}$, without differentiating through the look-ahead stage.

GSU thus complements release-time suppression with targeted control of local gradient factors.
When unseen acquisition reuses selected gates whose pre-activations remain in the negative tail, these factors remain small.
\S\ref{sec:experiments} evaluates the resulting acquisition resistance and normal-domain utility.

\section{Experiments}
\label{sec:experiments}

\subsection{Experimental Setup}
\label{sec:exp_setup}

\textbf{Protection scenarios and benchmark construction.}
\textit{1) Robust prevention:} A model provider may wish to prevent downstream fine-tuning from incorporating private information that the released model has never learned.
We simulate this setting with \tofu~\citep{maini2024tofu}, using a retain-only checkpoint trained on the 180 \texttt{retain90} authors.
The remaining 20 \texttt{forget10} authors define the forbidden domain; each contributes ten QA pairs to each of two pools, $B_1$ and $B_2$ (200 QA pairs each).
The attacker uses $B_2$, while the defender uses $B_1$ in the \emph{disjoint} setting and $B_2$ in the \emph{identical} setting.
\textit{2) Robust removal:} A model may already contain hazardous knowledge that must be removed and kept from being restored through downstream fine-tuning.
We study this setting on \texttt{WMDP-Bio} and \texttt{WMDP-Cyber}~\citep{li2024wmdp}, starting from a knowledgeable checkpoint.
Each domain's forget corpus is split into source-document pools $B_1$ for defense and $B_2$ for attack, with the full corresponding retain corpus supporting defense.
We report only the disjoint setting: since the target knowledge is already present, identical defense and attack data would degrade to conventional robust unlearning.

\textbf{Models and baselines.}
On \tofu, we evaluate instruction-tuned \texttt{Llama3-1B/3B/8B}~\citep{dubey2024llama,meta2024llama32} and \texttt{Qwen3.5-2B/4B/9B}~\citep{qwen2026qwen35}; on \wmdp, we use \texttt{Zephyr-7B-$\beta$}~\citep{tunstall2023zephyr}.
\emph{No Defense} denotes the shared pre-defense checkpoint: the retain-only base on \tofu and the original checkpoint on \wmdp.
We compare 5 conventional unlearning methods---GradDiff~\citep{maini2024tofu}, NPO~\citep{zhang2024negative}, RMU~\citep{li2024wmdp}, WGA~\citep{wang2025rethinking}, and SatImp~\citep{yang2025exploring}---and 3 robust defenses, RepNoise~\citep{rosati2024representation}, ILU~\citep{wang2025invariance}, and NPO+SAM~\citep{fan2025towards}, all with retention.

\textbf{Defense and attack settings.}
Defense uses 10 epochs with a base learning rate of $10^{-5}$ on \tofu, and 80 updates at $4\times10^{-6}$ on \wmdp.
All defense and attack runs use an effective batch size of 16.
Following fine-tuning-based evaluations of tamper resistance and robust unlearning~\citep{tamirisa2025tamper,fan2025towards}, we attack released models through supervised full-parameter fine-tuning on $\mathcal D_a$.
\tofu attacks run for 10 epochs, using a learning rate calibrated once per model on \emph{No Defense} and shared across methods; \wmdp attacks run for 150 updates at $4\times10^{-6}$.

\textbf{Evaluation metrics.}
We report early-mean and fifth-checkpoint forbidden scores:
$\overline{\textsf{ES}}_3$ and $\textsf{ES}_5$ on \tofu use extraction strength~\citep{wang2025towards,openunlearning2025} at attack epochs 1--3 and 5;
$\overline{\textsf{F}}_3$ and $\textsf{F}_5$ on \wmdp use domain accuracy at steps 25/50/75 and 125.
$\textsf{UA}_{90}$ is the maximum evaluated forbidden score with utility $\geq0.9U_{\mathrm{ref}}$.
For fair comparison, UWC~\citep{wang2025towards} calibrates all main-comparison releases to utility $\geq0.95U_{\mathrm{ref}}$, so release utility is omitted from the tables.
Scores are percentages (lower is better); bold and underline mark the best and second-best defended methods using unrounded values.
\S\ref{sec:appx-exp} provides full definitions and configurations.

\subsection{Main Comparisons}
\label{sec:main_results}

\begin{table*}[t]
\centering
\caption{\textbf{\tofu results of Llama3 and Qwen3.5 families under disjoint and identical prevention.}}
\label{table:main_tofu}
\small
\setlength{\tabcolsep}{2.5pt}
\renewcommand{\arraystretch}{1.2}
\resizebox{\textwidth}{!}{%
\begin{tabular}{cl*{18}{c}}
\toprule
& &
\multicolumn{3}{c}{\texttt{Llama-3.2-1B}} &
\multicolumn{3}{c}{\texttt{Llama-3.2-3B}} &
\multicolumn{3}{c}{\texttt{Llama-3.1-8B}} &
\multicolumn{3}{c}{\texttt{Qwen3.5-2B}} &
\multicolumn{3}{c}{\texttt{Qwen3.5-4B}} &
\multicolumn{3}{c}{\texttt{Qwen3.5-9B}} \\
\cmidrule(lr){3-5}
\cmidrule(lr){6-8}
\cmidrule(lr){9-11}
\cmidrule(lr){12-14}
\cmidrule(lr){15-17}
\cmidrule(lr){18-20}
& \textbf{Method}
& $\overline{\textsf{ES}}_3\downarrow$
& $\textsf{ES}_5\downarrow$
& $\textsf{UA}_{90}\downarrow$
& $\overline{\textsf{ES}}_3\downarrow$
& $\textsf{ES}_5\downarrow$
& $\textsf{UA}_{90}\downarrow$
& $\overline{\textsf{ES}}_3\downarrow$
& $\textsf{ES}_5\downarrow$
& $\textsf{UA}_{90}\downarrow$
& $\overline{\textsf{ES}}_3\downarrow$
& $\textsf{ES}_5\downarrow$
& $\textsf{UA}_{90}\downarrow$
& $\overline{\textsf{ES}}_3\downarrow$
& $\textsf{ES}_5\downarrow$
& $\textsf{UA}_{90}\downarrow$
& $\overline{\textsf{ES}}_3\downarrow$
& $\textsf{ES}_5\downarrow$
& $\textsf{UA}_{90}\downarrow$ \\
\midrule
& No Defense
& 9.04 & 16.33 & 50.08
& 9.36 & 19.41 & 50.75
& 9.51 & 19.63 & 53.68
& 6.48 & 6.69 & 6.84
& 9.55 & 26.36 & 66.12
& 19.44 & 90.92 & 98.53 \\
\cmidrule(r){2-20}

\multirow{9}{*}{\rotatebox[origin=c]{90}{Disjoint Prevention}}
& GradDiff
& 8.62 & 15.12 & 51.41
& 9.10 & 20.62 & 35.48
& 8.69 & \underline{18.83} & \textbf{44.56}
& 5.78 & 6.25 & 6.96
& 9.45 & 26.87 & 80.24
& 19.88 & 93.40 & 98.72 \\
& NPO
& 8.04 & 13.57 & 47.06
& 8.34 & 19.99 & 4.42
& \underline{7.84} & 20.42 & 95.80
& 4.85 & 5.07 & 5.74
& 9.06 & 25.57 & 64.99
& 19.72 & 93.15 & 98.86 \\
& RMU
& 9.02 & 16.17 & 49.59
& 9.42 & \textbf{19.37} & 50.40
& 9.35 & 19.77 & 86.55
& 6.47 & 6.71 & 6.88
& 9.53 & 26.48 & 68.21
& 19.75 & \underline{92.90} & \underline{98.32} \\
& WGA
& 8.31 & 14.34 & 48.19
& 8.61 & 20.57 & 48.01
& 8.40 & 21.93 & 89.77
& \underline{4.55} & 5.46 & 6.07
& 9.27 & 25.28 & 69.94
& 19.79 & 93.10 & 99.16 \\
& SatImp
& 9.02 & 17.22 & 55.32
& 9.48 & 23.06 & 55.90
& 8.52 & 24.66 & 91.11
& 5.93 & 6.07 & 6.35
& 9.26 & 26.94 & 94.69
& 20.85 & 93.05 & 99.36 \\
& RepNoise
& 8.67 & 16.69 & 53.16
& 9.05 & 23.08 & 40.49
& 8.80 & 21.60 & 86.93
& 5.54 & 6.12 & 6.84
& 9.48 & 24.76 & 63.72
& 19.71 & 92.95 & 98.53 \\
& ILU
& \underline{7.14} & \underline{12.93} & \underline{30.52}
& \textbf{8.15} & 20.42 & 4.47
& 8.42 & 21.60 & 87.12
& 4.93 & 5.40 & 5.95
& \textbf{8.24} & 25.06 & \underline{61.86}
& 19.80 & 93.30 & 98.80 \\
& NPO+SAM
& 7.45 & 13.73 & 45.63
& 8.35 & 21.39 & \underline{4.38}
& 8.36 & 21.33 & 86.74
& 4.73 & \underline{5.07} & \underline{5.34}
& 8.47 & \underline{22.80} & 62.22
& \underline{19.67} & 93.03 & \textbf{98.10} \\
& \textbf{GSU (Ours)}
& \textbf{6.60} & \textbf{12.40} & \textbf{27.87}
& \underline{8.21} & \underline{19.89} & \textbf{3.77}
& \textbf{7.78} & \textbf{18.03} & \underline{86.17}
& \textbf{4.06} & \textbf{4.44} & \textbf{5.03}
& \underline{8.28} & \textbf{21.81} & \textbf{59.40}
& \textbf{19.63} & \textbf{92.85} & 98.39 \\
\midrule

\multirow{9}{*}{\rotatebox[origin=c]{90}{Identical Prevention}}
& GradDiff
& 7.42 & 11.62 & 49.31
& 7.94 & 13.57 & 13.57
& 8.76 & 16.46 & 47.82
& 4.75 & 5.36 & 5.68
& 8.26 & 17.80 & 69.17
& 17.50 & 84.07 & 98.60 \\
& NPO
& 5.51 & 8.51 & 20.60
& 5.94 & 9.71 & \underline{2.81}
& \textbf{5.73} & 8.93 & \underline{2.74}
& 3.38 & 3.62 & 4.15
& 6.30 & 9.35 & 38.22
& 11.31 & 66.67 & 98.22 \\
& RMU
& 9.03 & 15.85 & 48.43
& 9.25 & 19.44 & 50.75
& 9.39 & 19.70 & 54.91
& 6.50 & 6.65 & 6.84
& 9.60 & 26.67 & 67.12
& 19.26 & 86.95 & 98.68 \\
& WGA
& 6.85 & 11.51 & 49.86
& 7.22 & 13.76 & 21.64
& 7.41 & 13.12 & 68.37
& \textbf{1.13} & \textbf{1.67} & \textbf{2.44}
& 7.82 & 15.18 & 81.74
& 16.58 & 83.30 & 98.75 \\
& SatImp
& 8.86 & 17.08 & 54.27
& 9.23 & 23.88 & 56.27
& 8.65 & 22.53 & 89.14
& 5.86 & 6.15 & 6.36
& 9.32 & 27.46 & 90.31
& 20.14 & 85.18 & 98.66 \\
& RepNoise
& 8.37 & 14.89 & 44.87
& 8.44 & 18.20 & 51.19
& 8.78 & 17.92 & 76.43
& 5.48 & 5.85 & 6.24
& 9.39 & 23.58 & 78.31
& 19.09 & 86.67 & 98.53 \\
& ILU
& 5.90 & 9.30 & 20.54
& 6.24 & 11.48 & 3.19
& 5.86 & 8.92 & 5.66
& 3.37 & 3.61 & 4.39
& \textbf{5.80} & 9.83 & \textbf{3.21}
& 11.23 & 66.56 & \underline{95.60} \\
& NPO+SAM
& \underline{5.28} & \textbf{8.17} & \textbf{17.56}
& \textbf{5.48} & \textbf{8.61} & 2.90
& 5.79 & \textbf{7.28} & 3.04
& 3.04 & 3.60 & 4.04
& 6.21 & \textbf{8.09} & 30.75
& \underline{9.43} & \underline{48.08} & \textbf{95.20} \\
& \textbf{GSU (Ours)}
& \textbf{5.24} & \underline{8.34} & \underline{20.52}
& \underline{5.91} & \underline{9.39} & \textbf{2.49}
& \underline{5.75} & \underline{8.88} & \textbf{2.64}
& \underline{2.73} & \underline{3.13} & \underline{3.95}
& \underline{6.17} & \underline{9.31} & \underline{30.70}
& \textbf{9.04} & \textbf{45.08} & 96.11 \\
\bottomrule
\end{tabular}%
}
\end{table*}

\textbf{Results on \tofu.}
GSU ranks first or second in 34 of 36 model--setting--metric comparisons in Tab.~\ref{table:main_tofu}.
Under disjoint prevention, it achieves the lowest $\textsf{ES}_5$ on five of six models and the lowest $\textsf{UA}_{90}$ on four, indicating resistance both at a fixed attack checkpoint and among utility-preserving attacks.
For example, on \texttt{Llama-3.2-1B}, GSU reduces $\textsf{UA}_{90}$ from 50.08 without defense to 27.87, compared with 30.52 for the strongest baseline.
Under identical prevention, GSU remains first or second in 17 of 18 comparisons.
On \texttt{Qwen3.5-9B}, it achieves an $\textsf{ES}_5$ of 45.08, versus 48.08 for NPO+SAM and 90.92 without defense.
These results demonstrate broad improvements across models and both prevention settings.

\begin{wraptable}[12]{r}{0.42\textwidth}
\vspace{\dimexpr-\intextsep-0.7pt\relax}
\centering

\captionsetup{
    font=footnotesize,
    position=top,
    skip=2pt,
    belowskip=0pt
}
\caption{\textbf{\wmdp{} results under disjoint attacks.}}
\label{table:main_wmdp}

\scriptsize
\setlength{\tabcolsep}{1.5pt}
\renewcommand{\arraystretch}{1.03}
\begin{tabular*}{\linewidth}{@{\extracolsep{\fill}}l*{6}{c}@{}}
\toprule
& \multicolumn{3}{c}{\texttt{Bio}}
& \multicolumn{3}{c}{\texttt{Cyber}} \\
\cmidrule(lr){2-4}\cmidrule(lr){5-7}
\textbf{Method}
& $\overline{\textsf{F}}_3\downarrow$
& $\textsf{F}_5\downarrow$
& $\textsf{UA}_{90}\downarrow$
& $\overline{\textsf{F}}_3\downarrow$
& $\textsf{F}_5\downarrow$
& $\textsf{UA}_{90}\downarrow$ \\
\midrule
No Defense
& 65.23 & 65.36 & 65.99
& 43.78 & 43.83 & 44.14 \\
\midrule
GradDiff
& 37.54 & 37.72 & 37.87
& 33.21 & 33.71 & 35.51 \\
NPO
& 35.44 & 36.39 & 36.48
& 32.64 & \underline{31.08} & 33.19 \\
RMU
& \textbf{34.04} & \underline{34.83} & \textbf{34.24}
& 35.09 & 32.48 & 32.57 \\
WGA
& 36.89 & 37.01 & 37.18
& \underline{30.99} & 31.72 & \textbf{30.11} \\
SatImp
& 39.03 & 39.19 & 39.22
& 33.93 & 34.31 & 33.84 \\
RepNoise
& 36.18 & 35.69 & 35.71
& 34.42 & 34.99 & 34.37 \\
ILU
& 39.81 & 39.98 & 39.93
& 31.98 & 33.09 & 31.87 \\
NPO+SAM
& 38.28 & 38.47 & 38.53
& 35.51 & 35.56 & 34.99 \\
\textbf{GSU (Ours)}
& \underline{34.67} & \textbf{34.13} & \underline{34.98}
& \textbf{30.16} & \textbf{30.23} & \underline{30.92} \\
\bottomrule
\end{tabular*}
\vspace{-4pt}
\end{wraptable}

\textbf{Results on \wmdp.}
GSU ranks first or second across both domains and all three metrics in Tab.~\ref{table:main_wmdp}.
It achieves the lowest $\textsf{F}_5$ on both Bio and Cyber, reducing accuracy to 34.13 and 30.23, respectively, compared with 65.36 and 43.83 without defense.
These scores also improve over the strongest competing results of 34.83 on Bio and 31.08 on Cyber.
GSU further attains the lowest $\overline{\textsf{F}}_3$ on Cyber and the second-lowest $\textsf{UA}_{90}$ in both domains.
Together, these results extend the acquisition-resistance gains observed on \tofu{} to hazardous-knowledge restoration under disjoint attacks.

\subsection{Further Analyses}
\label{sec:ablation}

We connect acquisition resistance, selected-gate responses, and benign learnability.
Under disjoint \tofu, the first two analyses use \texttt{Qwen3.5-2B},
while benign fine-tuning covers three models; \S\ref{sec:appx-add-result} details the frozen settings.

\begin{figure}[t]
    \centering
    \includegraphics[width=\textwidth]{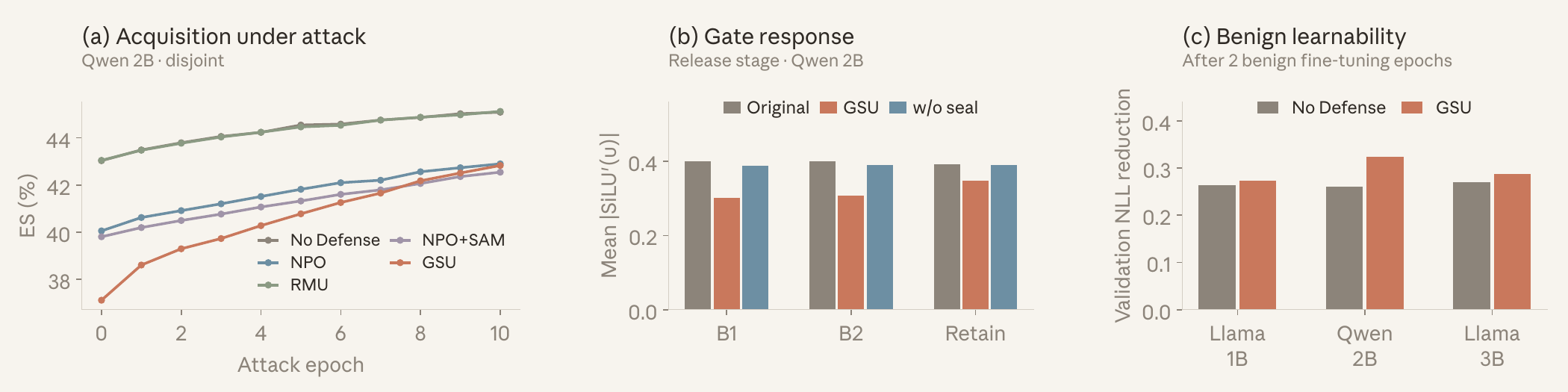}
    \captionsetup{aboveskip=7pt}
    \caption{\textbf{Acquisition resistance, gate response, and benign learnability.}
    (a) \textsf{ES} over ten attack epochs.
    (b) Mean absolute SiLU derivative over selected gates at release on $B_1$, $B_2$, and retain inputs.
    (c) Validation NLL reduction after two epochs of benign fine-tuning, interpreted relative to initial loss.
    Panels (a,b) use \texttt{Qwen3.5-2B} under disjoint \tofu; all results are single runs.}
    \label{fig:gsu-further-analyses}
\end{figure}

\textbf{Acquisition under attack.}
In Fig.~\refs{fig:gsu-further-analyses}{a}, GSU has the lowest \textsf{ES}
among the five methods at all eleven observed checkpoints.
At epoch~5, it reaches 4.713\%, versus 5.056\% for NPO and 5.102\% for NPO+SAM;
at epoch~10, GSU and NPO+SAM reach 5.377\% and 5.533\%, respectively.
This supports lower extraction throughout the measured attack budget.
The gap narrows and GSU starts lower, so the comparison does not establish uniformly slower learning.

\textbf{Release-stage gate response.}
Fig.~\refs{fig:gsu-further-analyses}{b} probes selected-gate responses at release.
GSU reduces mean absolute SiLU derivatives relative to the original control by
approximately 24\% on $B_1$ and 23\% on $B_2$, versus 11\% on retain inputs;
without sealing stays closer to the control.
This is consistent with targeted local attenuation; matched post-attack probes
in \S\ref{subsec:appx-gsu-postattack-gates} examine its partial reversal and residual differences.

\textbf{Benign learnability.}
Fig.~\refs{fig:gsu-further-analyses}{c} tests whether benign learning remains possible.
After two economics fine-tuning epochs, GSU on \texttt{Llama-3.2-1B/3B}
and \texttt{Qwen3.5-2B} reduces validation NLL to within 0.005 of the No Defense controls.
This supports retained benign optimization ability; larger reductions also reflect
higher initial losses, not improved learning efficiency.

\textbf{Additional results.}
\S\ref{subsec:appx-gsu-attack-utility} reports attack-time utility.
\S\ref{subsec:appx-gsu-complete-ablation} and \S\ref{subsec:appx-gsu-seal-ablation}
provide component ablations and sealing trajectories.
\S\ref{subsec:appx-gsu-gates-dynamics} and \S\ref{subsec:appx-gsu-postattack-gates}
examine gate responses at release and after attack.
\S\ref{subsec:appx-gsu-benign} reports benign accuracy and NLL trajectories,
while \S\ref{subsec:appx-gsu-benign-attack} tests protection after benign fine-tuning.

\section{Conclusion}
\label{sec:conclusion}

Preemptive unlearning requires controlling not only released behavior, but also what downstream optimization can acquire.
GSU addresses this gap by exposing proxy-induced changes, localizing recruited gates, and sealing their local gradient pathways.
Restarting from the pre-defense weights, it pushes selected pre-activations toward a shared SiLU negative-tail threshold alongside response suppression and retain supervision.
The framework distinguishes prevention and reacquisition resistance from benign adaptability.
Across the evaluated settings, GSU limits forbidden acquisition while retaining benign fine-tunability, with component ablations and matched gate probes supporting targeted sealing beyond output suppression alone.

\bibliography{ref}
\bibliographystyle{iclr2027_conference}
\clearpage


\appendix

\vspace*{1mm}
\begin{center}
    \LARGE \bf {Appendix of \textit{Gradient-Sealed Unlearning}}
\end{center}

\definecolor{kleinblue}{rgb}{0,0.18,0.65}

\etocdepthtag.toc{mtappendix}
\etocsettagdepth{mtchapter}{none}
\etocsettagdepth{mtappendix}{subsection}
{
  \hypersetup{linkcolor=kleinblue}
  \tableofcontents
}

\clearpage

\section*{Overview of the Appendix}

This appendix supplies notation, related work, proofs, pseudocode, experimental settings, and supplementary analyses supporting the main paper.

\begin{itemize}[leftmargin=*,itemsep=0.3em,parsep=0em,partopsep=0em,before=\vspace{-0.4em},after=\vspace{0em}]
    \item \S\ref{sec:appx-notations} summarizes the notation.
    \item \S\ref{sec:appx-related-works} reviews the four closest research threads.
    \item \S\ref{sec:appx-theory} proves Prop.~\ref{prop:future_acquisition_bound} and gives a minimal counterexample.
    \item \S\ref{sec:appx-pseudocode} presents GSU pseudocode.
    \item \S\ref{sec:appx-exp} details data splits, metrics, implementations, and optimization.
    \item \S\ref{sec:appx-add-result} connects acquisition and utility to component ablations and gate responses, then examines benign learning and protection after adaptation.
    \item \S\ref{sec:appx-limitations} discusses scope and limitations.
\end{itemize}

\section{Notations}
\label{sec:appx-notations}

This section summarizes the main notations in Tab.~\ref{tab:notation}.

{
\renewcommand{\arraystretch}{1.05}
\begin{table}[H]
\centering
\caption{Core notation used in the paper.}
\label{tab:notation}
\small
\begin{tabular}{@{}>{\raggedright\arraybackslash}p{0.28\textwidth}@{\hspace{10pt}}>{\raggedright\arraybackslash}p{0.66\textwidth}@{}}
\toprule
\textbf{Notation} & \textbf{Description} \\
\midrule
$\xbf,\ybf$
& Prompt and autoregressive response. \\

$\pi_{\btheta},\ell_{\btheta}$
& Response likelihood and response NLL. \\

$\mathcal F,\mathcal R$; $\mathbb P_{\mathcal F},\mathbb P_{\mathcal R}$
& Forbidden and normal domains, and their data distributions. \\

$\mathcal D_f,\mathcal D_a,\mathcal D_r$
& Defender proxy, unseen attacker, and retain sets. \\

$\btheta_o,\btheta_{\mathrm{la}},\btheta_{\mathrm{rel}},\btheta_{\mathrm{atk}}$
& Original, look-ahead, released, and post-attack parameters. \\

$\mathcal U,\mathcal A,\mathfrak A$
& Pre-release defense, acquisition attack, and attack class. \\

$\mathsf{Perf}_{\mathcal F},\mathsf{Perf}_{\mathcal R}$
& Forbidden capability and normal utility; hats denote empirical estimates. \\

$\epsilon_f,\epsilon_u,\epsilon_r$
& Forbidden-capability ceiling, usable-utility floor, and allowed release-time utility drop. \\

$\mathcal U_T,\textsf{UA}_T$
& Usable attack checkpoints through $T$ and their maximum empirical forbidden capability. \\

$\mathcal L_a,\mathbf g_a$
& Attacker loss and its gradient at $\btheta_{\mathrm{rel}}$. \\

$S_{\mathcal F},\Delta S_{\mathcal F}$
& Differentiable forbidden-domain score and its one-step acquisition gain. \\

$\mathcal S,\Pi_{\mathcal S}$
& Forbidden-sensitive parameter subspace and its orthogonal projector. \\

$u_{\ell j}^{i}(\btheta)$
& Gate pre-activation at layer $\ell$, channel $j$, under context $(\xbf,\ybf^{<i})$ used to predict $y^i$. \\

$\phi,\phi'$
& SiLU activation and its derivative. \\

$d_\ell,K_{\mathrm{la}},K,p$
& Layer width, look-ahead steps, selected layer count, and within-layer channel fraction. \\

$r_{\ell j},m_\ell,R_\ell$
& Directional channel score, original layer magnitude, and normalized layer score. \\

$\mathcal I^\star,\mathcal N_\ell^\star,\mathcal G$
& Selected layers, selected channels within layer $\ell$, and the fixed layer--channel gate set. \\

$\tau<0$
& Shared pre-activation threshold in the SiLU negative tail. \\

$\mathcal L_{\mathrm{sup}},\mathcal L_{\mathrm{ret}}$
& Response suppression and normal-domain retention losses. \\

$\mathcal L_{\mathrm{seal}},\mathcal L_{\mathrm{GSU}}$
& Selected-gate seal penalty and joint defense objective. \\

$\lambda_s,\lambda_r$
& Sealing and retention weights. \\
\bottomrule
\end{tabular}
\end{table}
}

\section{Detailed Related Work}
\label{sec:appx-related-works}

This section separates four neighboring research threads.
\S\ref{subsec:appx-related-retrospective} reviews retrospective LLM unlearning and its robustness, \S\ref{subsec:appx-related-openweight} covers open-weight and tamper-resistant safeguards, \S\ref{subsec:appx-related-data} discusses data-side preemptive protection, and \S\ref{subsec:appx-related-representation} summarizes representation localization and pathway control.
Together, these threads motivate resisting unseen future learning in released weights.

\subsection{Retrospective LLM Unlearning and Robustness}
\label{subsec:appx-related-retrospective}
Retrospective LLM unlearning extends the broader goal of removing a designated training influence from an already trained model~\citep{bourtoule2021machine} to knowledge and behaviors encoded by LLMs.
Most methods optimize a release-time forget--retain trade-off.
At the output level, GA maximizes forget-set loss~\citep{jang2023knowledge,yao2024large}, GradDiff couples forgetting with retain training~\citep{maini2024tofu}, and NPO uses a reference-relative preference objective~\citep{zhang2024negative}.
Subsequent variants simplify or rebalance these losses, including SimNPO~\citep{fan2025simplicity}, WGA~\citep{wang2025rethinking}, SatImp~\citep{yang2025exploring}, forget-only loss adjustment~\citep{wang2025llm}, and reinforcement-based unlearning~\citep{zhang2025rule}.
A complementary line intervenes inside the network: RMU misdirects target representations~\citep{li2024wmdp}, activation redirection changes internal states~\citep{shen2026llm}, Ssiuu regularizes spurious unlearning neurons~\citep{yang2026erase}, and reasoning- or belief-guided objectives target model-generated alternatives rather than only labeled answers~\citep{liao2026explainable,li2026llm}.

Benchmarks such as \tofu~\citep{maini2024tofu}, \wmdp~\citep{li2024wmdp}, and \muse~\citep{shi2025muse} evaluate privacy, copyright, and hazardous-knowledge removal, while overlap-aware settings such as BLUR test whether forgetting remains meaningful when forget and retain distributions intersect~\citep{hu2025blur}.
However, a low target score immediately after unlearning can reflect suppression or obfuscation rather than durable removal.
Prompting, probing, relearning, and benign downstream fine-tuning can recover apparently forgotten behavior~\citep{lynch2024eight,hu2025unlearning}, motivating robustness-enhancing approaches based on latent adversarial training~\citep{sheshadri2025latent}, sharpness-aware optimization~\citep{fan2025towards}, downstream invariance~\citep{wang2025invariance}, and optimizer simplification~\citep{lang2026downgrade}.
Our \wmdp regime directly evaluates this removal-plus-resistance problem.
Preemptive unlearning is broader in one crucial respect: it also covers a target capability absent from the original checkpoint and asks whether unseen future data can acquire it after release, rather than only whether previously encoded behavior can be recovered.

\subsection{Open-Weight Safety and Tamper-Resistant Safeguards}
\label{subsec:appx-related-openweight}
Weight access allows downstream users to bypass inference-time moderation by directly changing the model.
Aligned LLMs can lose safety after malicious or even benign fine-tuning~\citep{qi2024finetuning,lermen2023lora}, and safeguarded outputs may themselves provide supervision for eliciting harmful capabilities~\citep{kaunismaa2026eliciting}.
Input--output guard models such as Llama Guard and WildGuard~\citep{inan2023llamaguard,han2024wildguard} remain useful at deployment but cannot prevent weight-level adaptation.
Model-side defenses include representation noising~\citep{rosati2024representation}, tamper-resistant safeguards~\citep{tamirisa2025tamper}, perturbation-aware alignment~\citep{huang2024vaccine,huang2025booster}, procedure-constrained defenses~\citep{hsu2024safe,chen2025sdd}, and filtered pretraining~\citep{obrien2025deep}.
These works mainly preserve broad safety alignment or refusal behavior.
Preemptive unlearning instead asks whether clean, disjoint future data can acquire a designated capability while the released model retains normal utility.

\subsection{Data-Side Preemptive Protection}
\label{subsec:appx-related-data}
Unlearnable examples perturb training data so that a learner cannot easily acquire their semantics~\citep{huang2021unlearnable,fowl2021adversarial,fu2022robust}.
Related privacy and copyright defenses include Fawkes for face recognition~\citep{shan2020fawkes}, Anti-DreamBooth for personalized generation~\citep{vanle2023antidreambooth}, and artwork protections such as Glaze and Nightshade~\citep{liang2023adversarial,shan2023glaze,shan2024nightshade}.
Adaptive evaluations show that perturbation-based protection can be brittle under robust preprocessing or retraining~\citep{honig2025adversarial}.
The key distinction is control: data-side methods assume that the protected examples are those later used for training, whereas our defender cannot modify or even observe the attacker's future data.
GSU therefore encodes resistance into the released weights using a separate proxy set.

\subsection{Representation Localization and Pathway Control}
\label{subsec:appx-related-representation}
Model editing localizes and modifies internal computations associated with facts~\citep{meng2022locating,meng2023massediting}; concept erasure removes linearly represented information~\citep{ravfogel2020null,belrose2023leace}; and representation engineering identifies low-dimensional directions associated with high-level behavior~\citep{zou2023representation,arditi2024refusal}.
Gradient routing further shows that data-dependent learning signals can be localized to computational pathways~\citep{cloud2024gradient}.
GSU differs in objective: it does not merely change a current output or erase a linearly decodable feature, but targets the gate derivatives through which future domain data would update the model.

\section{Proofs and Supporting Analysis}
\label{sec:appx-theory}

This section provides the proof and supporting analysis for Prop.~\ref{prop:future_acquisition_bound}.
\S\ref{subsec:appx-proof-gradient} first derives the future-acquisition bound with an explicit second-order remainder; \S\ref{subsec:appx-counterexample} then gives a minimal ReLU counterexample showing how identical release-time scores can conceal different acquisition receptivity and how activation suppression reduces it.

\subsection{Proof of Proposition~\ref{prop:future_acquisition_bound}}
\label{subsec:appx-proof-gradient}

\prop*

\begin{proof}
Fix $\mathcal D_a$ and abbreviate
\[
\btheta_0 \coloneq \btheta_{\mathrm{rel}},
\qquad
\mathbf g \coloneq \mathbf g_a.
\]
Since $\mathbf g$ is evaluated at $\btheta_0$ for the fixed dataset $\mathcal D_a$, it is independent of the step size $\eta$.
Let $\mathcal N$ be a neighborhood of $\btheta_0$ on which $S_{\mathcal F}$ is both $L_{\mathcal F}$-Lipschitz and $\beta_{\mathcal F}$-smooth.
Because $\mathbf g$ is fixed, there exists $\eta_0>0$ such that, for every $\eta\in(0,\eta_0]$, the entire line segment
\[
\left\{
\btheta_0-t\eta\mathbf g : t\in[0,1]
\right\}
\]
is contained in $\mathcal N$.

Applying the fundamental theorem of calculus along this segment gives
\begin{align*}
\Delta S_{\mathcal F}
&=
S_{\mathcal F}(\btheta_0-\eta\mathbf g)
-
S_{\mathcal F}(\btheta_0) \\
&=
-\eta
\int_0^1
\left\langle
\nabla_{\btheta}S_{\mathcal F}
(\btheta_0-t\eta\mathbf g),
\mathbf g
\right\rangle
\,\mathrm dt \\
&=
-\eta
\left\langle
\nabla_{\btheta}S_{\mathcal F}(\btheta_0),
\mathbf g
\right\rangle
+
R_\eta,
\end{align*}
where
\[
R_\eta
\coloneq
-\eta
\int_0^1
\left\langle
\nabla_{\btheta}S_{\mathcal F}
(\btheta_0-t\eta\mathbf g)
-
\nabla_{\btheta}S_{\mathcal F}(\btheta_0),
\mathbf g
\right\rangle
\,\mathrm dt.
\]
By the Cauchy--Schwarz inequality and the $\beta_{\mathcal F}$-smoothness of $S_{\mathcal F}$,
\begin{align*}
|R_\eta|
&\leq
\eta
\int_0^1
\left\|
\nabla_{\btheta}S_{\mathcal F}
(\btheta_0-t\eta\mathbf g)
-
\nabla_{\btheta}S_{\mathcal F}(\btheta_0)
\right\|_2
\|\mathbf g\|_2
\,\mathrm dt \\
&\leq
\eta
\int_0^1
\beta_{\mathcal F}
\left\|
t\eta\mathbf g
\right\|_2
\|\mathbf g\|_2
\,\mathrm dt \\
&=
\frac{\beta_{\mathcal F}}{2}
\eta^2
\|\mathbf g\|_2^2.
\end{align*}
Since $\mathbf g=\mathbf g_a$ is fixed with respect to $\eta$, the preceding estimate implies that $R_\eta=O(\eta^2)$. 
Therefore,
\[
\Delta S_{\mathcal F}
=
-\eta
\left\langle
\nabla_{\btheta}S_{\mathcal F}(\btheta_{\mathrm{rel}}),
\mathbf g_a
\right\rangle
+
O(\eta^2),
\]
which proves the first-order expansion.

We next bound its linear term. Since $S_{\mathcal F}$ is differentiable and locally $L_{\mathcal F}$-Lipschitz at $\btheta_0$, for every unit vector $\mathbf v$ and all sufficiently small nonzero $h$,
\[
\frac{
\left|
S_{\mathcal F}(\btheta_0+h\mathbf v)
-
S_{\mathcal F}(\btheta_0)
\right|
}{|h|}
\leq
L_{\mathcal F}.
\]
Taking $h\to0$ and using differentiability yields
\[
\left|
\left\langle
\nabla_{\btheta}S_{\mathcal F}(\btheta_0),
\mathbf v
\right\rangle
\right|
\leq
L_{\mathcal F}.
\]
Taking the supremum over all $\|\mathbf v\|_2=1$ gives
\[
\left\|
\nabla_{\btheta}S_{\mathcal F}(\btheta_0)
\right\|_2
\leq
L_{\mathcal F}.
\]

By the local-separation condition,
\[
\Pi_{\mathcal S}
\nabla_{\btheta}S_{\mathcal F}(\btheta_0)
=
\nabla_{\btheta}S_{\mathcal F}(\btheta_0).
\]
Moreover, because $\Pi_{\mathcal S}$ is an orthogonal projector, it is self-adjoint. Hence,
\begin{align*}
\left\langle
\nabla_{\btheta}S_{\mathcal F}(\btheta_0),
\mathbf g
\right\rangle
&=
\left\langle
\Pi_{\mathcal S}
\nabla_{\btheta}S_{\mathcal F}(\btheta_0),
\mathbf g
\right\rangle \\
&=
\left\langle
\nabla_{\btheta}S_{\mathcal F}(\btheta_0),
\Pi_{\mathcal S}\mathbf g
\right\rangle.
\end{align*}
It follows from the Cauchy--Schwarz inequality that
\begin{align*}
-
\left\langle
\nabla_{\btheta}S_{\mathcal F}(\btheta_0),
\mathbf g
\right\rangle
&=
-
\left\langle
\nabla_{\btheta}S_{\mathcal F}(\btheta_0),
\Pi_{\mathcal S}\mathbf g
\right\rangle \\
&\leq
\left|
\left\langle
\nabla_{\btheta}S_{\mathcal F}(\btheta_0),
\Pi_{\mathcal S}\mathbf g
\right\rangle
\right| \\
&\leq
\left\|
\nabla_{\btheta}S_{\mathcal F}(\btheta_0)
\right\|_2
\left\|
\Pi_{\mathcal S}\mathbf g
\right\|_2 \\
&\leq
L_{\mathcal F}
\left\|
\Pi_{\mathcal S}\mathbf g
\right\|_2.
\end{align*}
Combining this inequality with the explicit remainder estimate gives
\[
\Delta S_{\mathcal F}
\leq
\eta L_{\mathcal F}
\left\|
\Pi_{\mathcal S}\mathbf g_a
\right\|_2
+
\frac{\beta_{\mathcal F}}{2}
\eta^2
\|\mathbf g_a\|_2^2.
\]
For fixed $\mathcal D_a$, the second term is $O(\eta^2)$, and therefore
\[
\Delta S_{\mathcal F}
\leq
\eta L_{\mathcal F}
\left\|
\Pi_{\mathcal S}\mathbf g_a
\right\|_2
+
O(\eta^2),
\]
which completes the proof.
\end{proof}

\subsection{A Simple ReLU Counterexample}
\label{subsec:appx-counterexample}

We give a minimal construction where a zero release score coexists with a nonzero acquisition gradient, while activation suppression reduces that gradient.

Let $\operatorname{ReLU}(z)=\max\{z,0\}$ act element-wise and consider
\[
\mathbf h_u(\xbf)=\operatorname{ReLU}(\mathbf A_u\xbf),
\qquad
f_{\btheta}(\xbf)=\mathbf w^\top\mathbf h_u(\xbf),
\qquad
\mathbf A_u=
\begin{pmatrix}
u&-1\\
-1&1
\end{pmatrix},
\]
where $\mathbf w=(w_f,w_r)^\top$ and $\btheta=(w_f,u,w_r)^\top$.
As illustrated in Fig.~\refs{fig:minimal-relu-counterexample}{a}, the first hidden unit forms the forbidden-sensitive pathway, parameterized by $(w_f,u)$, while the second forms an independent retain pathway controlled by $w_r$.
The fixed entries $-1$ isolate these pathways on the two basis inputs below and keep inactive pre-activations away from the ReLU kink.
Accordingly, let
\[
\mathcal S
=
\{(\alpha,\beta,0)^\top:\alpha,\beta\in\mathbb R\}.
\]

For the forbidden and retain inputs $\xbf_f=\mathbf e_1$ and $\xbf_r=\mathbf e_2$, respectively, any $u>0$ gives
\[
\mathbf h_u(\xbf_f)=(u,0)^\top,
\qquad
\mathbf h_u(\xbf_r)=(0,1)^\top,
\]
and therefore
\[
S_{\mathcal F}(\btheta)
\coloneq
f_{\btheta}(\xbf_f)
=
w_fu,
\qquad
f_{\btheta}(\xbf_r)=w_r.
\]

The two rows of Fig.~\refs{fig:minimal-relu-counterexample}{b} trace output suppression and additional activation suppression from their released states to their one-step gains:
\[
\btheta_{\mathrm{out}}=(0,1,1)^\top,
\qquad
\btheta_{\mathrm{act}}=(0,\delta,1)^\top,
\qquad
0<\delta<1.
\]
Both released states satisfy
\[
S_{\mathcal F}(\btheta_{\mathrm{out}})
=
S_{\mathcal F}(\btheta_{\mathrm{act}})
=
0,
\qquad
f_{\btheta_{\mathrm{out}}}(\xbf_r)
=
f_{\btheta_{\mathrm{act}}}(\xbf_r)
=
1,
\]
although their forbidden-sensitive activations are $1$ and $\delta$, respectively.

For this illustrative regression model, the attacker minimizes squared loss, equivalent up to an additive constant to the NLL of a unit-variance Gaussian with mean $f_{\btheta}(\xbf_f)$:
\[
\mathcal L_a(\btheta)
=
\frac12
\left(
f_{\btheta}(\xbf_f)-1
\right)^2
=
\frac12(w_fu-1)^2.
\]
For $u>0$, its gradient is
\[
\mathbf g_a(\btheta)
=
(w_fu-1)
\begin{pmatrix}
u\\w_f\\0
\end{pmatrix}.
\]
Consequently,
\[
\mathbf g_a^{\mathrm{out}}
=
(-1,0,0)^\top,
\qquad
\mathbf g_a^{\mathrm{act}}
=
(-\delta,0,0)^\top.
\]
Both gradients lie in $\mathcal S$, yielding the projected norms shown in the third column of Fig.~\refs{fig:minimal-relu-counterexample}{b}:
\[
\left\|
\Pi_{\mathcal S}\mathbf g_a^{\mathrm{out}}
\right\|_2
=
1,
\qquad
\left\|
\Pi_{\mathcal S}\mathbf g_a^{\mathrm{act}}
\right\|_2
=
\delta.
\]

After one gradient step with the same step size $\eta$,
\[
\btheta_{\mathrm{out}}^+
=
(\eta,1,1)^\top,
\qquad
\btheta_{\mathrm{act}}^+
=
(\eta\delta,\delta,1)^\top,
\]
giving the exact acquisition gains shown in the final column of Fig.~\refs{fig:minimal-relu-counterexample}{b}:
\[
\Delta S_{\mathcal F}^{\mathrm{out}}
=
\eta,
\qquad
\Delta S_{\mathcal F}^{\mathrm{act}}
=
\eta\delta^2.
\]
The first factor of $\delta$ comes from the smaller attacker gradient and the second from the remaining activation; both released models preserve a retain output of one.

Finally, for $u>0$,
\[
\nabla_{\btheta}S_{\mathcal F}(\btheta)
=
(u,w_f,0)^\top
\in\mathcal S,
\]
so the local-separation condition holds.
Both released states lie strictly away from the ReLU kink; in sufficiently small neighborhoods around them, $S_{\mathcal F}(\btheta)=w_fu$ is locally Lipschitz and $1$-smooth.
Thus, this regression construction satisfies the local regularity and separation conditions of Prop.~\ref{prop:future_acquisition_bound}, while showing that identical release-time scores can conceal different acquisition receptivity.
The choice $0<\delta<1$ gives a continuous $\delta^2$ separation while remaining away from the ReLU kink; moving the forbidden gate strictly below zero would additionally set its ReLU derivative to zero, matching the low-slope sealing mechanism that GSU applies during defense optimization.

\begin{figure*}[t]
\centering
\begin{adjustbox}{max width=\textwidth}
\begin{tikzpicture}[
    font=\small,
    input/.style={circle,draw,minimum size=6.6mm,inner sep=0pt},
    hidden/.style={draw,rounded corners=2pt,minimum width=31.5mm,minimum height=8.2mm,align=center},
    output/.style={draw,rounded corners=2pt,minimum width=31.5mm,minimum height=10.5mm,align=center},
    cell/.style={draw,rounded corners=2pt,minimum width=24.5mm,minimum height=8.5mm,align=center,inner xsep=1.2mm},
    state/.style={draw,rounded corners=2pt,minimum width=20.5mm,minimum height=8.5mm,align=center},
    flow/.style={-{Latex[length=2.2mm,width=1.5mm]},thick},
    fpath/.style={flow,draw=red!70!black},
    rpath/.style={flow,draw=black!65},
    fixed/.style={flow,dashed,draw=black!45}
]

\node[font=\normalsize\bfseries,anchor=west] at (0,2.15)
    {(a) Two-path ReLU network};

\node[input] (x1) at (0.65,1.05) {$x_1$};
\node[input] (x2) at (0.65,-0.35) {$x_2$};

\node[hidden,draw=red!70!black,fill=red!4]
    (hf) at (4.10,1.05) {$h_f=\operatorname{ReLU}(ux_1-x_2)$};
\node[hidden,fill=black!3]
    (hr) at (4.10,-0.35) {$h_r=\operatorname{ReLU}(-x_1+x_2)$};
\node[output] (out) at (9.05,0.35)
    {$f_{\btheta}(\xbf)$\\[0.3mm]$=w_fh_f+w_rh_r$};

\draw[fpath] (x1) -- node[above] {$u$} (hf);
\draw[fixed] (x2) -- node[pos=.34,below,sloped] {$-1$} (hf);
\draw[fixed] (x1) -- node[pos=.34,above,sloped] {$-1$} (hr);
\draw[rpath] (x2) -- node[below] {$1$} (hr);
\draw[fpath] (hf) -- node[above,sloped] {$w_f$} (out);
\draw[rpath] (hr) -- node[below,sloped] {$w_r$} (out);

\draw[black!18,line width=0.7pt] (0,-1.35) -- (11.1,-1.35);

\node[font=\normalsize\bfseries,anchor=west] at (0,-2.05)
    {(b) One-step acquisition from identical release outputs};
\node[font=\small\bfseries,align=center] at (1.45,-2.85)
    {released state};
\node[font=\small\bfseries,align=center] at (4.25,-2.85)
    {forbidden activation};
\node[font=\small\bfseries,align=center] at (7.35,-2.85)
    {projected gradient};
\node[font=\small\bfseries,align=center] at (10.45,-2.85)
    {one-step gain};

\node[font=\small\bfseries,anchor=east] at (0.15,-3.65) {Output-suppressed};
\node[state] (tout) at (1.45,-3.65) {$\btheta_{\mathrm{out}}$};
\node[cell] (hout) at (4.25,-3.65) {$h_f=1$};
\node[cell] (gout) at (7.35,-3.65) {$\|\Pi_{\mathcal S}\mathbf g_a\|_2=1$};
\node[cell] (dout) at (10.45,-3.65) {$\Delta S_{\mathcal F}=\eta$};

\node[font=\small\bfseries,anchor=east,text=red!70!black] at (0.15,-4.95) {Activation-suppressed};
\node[state,draw=red!70!black,fill=red!4] (tact) at (1.45,-4.95) {$\btheta_{\mathrm{act}}$};
\node[cell,draw=red!70!black,fill=red!4] (hact) at (4.25,-4.95) {$h_f=\delta$};
\node[cell,draw=red!70!black,fill=red!4] (gact) at (7.35,-4.95) {$\|\Pi_{\mathcal S}\mathbf g_a\|_2=\delta$};
\node[cell,draw=red!70!black,fill=red!4] (dact) at (10.45,-4.95) {$\Delta S_{\mathcal F}=\eta\delta^2$};

\draw[flow] (tout.east) -- (hout.west);
\draw[flow] (hout.east) -- (gout.west);
\draw[flow] (gout.east) -- (dout.west);
\draw[fpath] (tact.east) -- (hact.west);
\draw[fpath] (hact.east) -- (gact.west);
\draw[fpath] (gact.east) -- (dact.west);

\node[font=\small,align=center,anchor=west] at (0.45,-5.85)
    {Both rows satisfy $S_{\mathcal F}(\btheta)=0$ and $f_{\btheta}(\xbf_r)=1$ at release.};

\end{tikzpicture}
\end{adjustbox}
\caption{\textbf{A minimal ReLU counterexample.}
Panel (a) shows the forbidden-sensitive path in red and the retain path in gray; dashed arrows are fixed inhibitory connections.
For $\xbf_f=\mathbf e_1$ and $\xbf_r=\mathbf e_2$, both released states in (b) have identical forbidden and retain outputs.
Reducing the forbidden activation from $1$ to $\delta$ reduces the projected attacker-gradient norm by $\delta$ and the exact one-step acquisition gain by $\delta^2$.}
\label{fig:minimal-relu-counterexample}
\end{figure*}

\section{Pseudocode}
\label{sec:appx-pseudocode}

\begin{algorithm}[H]
\small
\caption{Gradient-Sealed Unlearning (GSU)}
\label{alg:gsu}

\KwIn{
Original parameters $\btheta_o$;
proxy and retain sets $\mathcal D_f,\mathcal D_r$;
look-ahead steps $K_{\mathrm{la}}$;
selection parameters $K,p$;
threshold $\tau<0$;
loss weights $\lambda_s,\lambda_r$.
}
\KwOut{Released parameters $\btheta_{\mathrm{rel}}$.}

\BlankLine
\tcp{Expose}
Fit a disposable copy of $\btheta_o$ on $\mathcal D_f$ for
$K_{\mathrm{la}}$ steps using Eq.~\eqref{eq:gsu-lookahead},
obtaining $\btheta_{\mathrm{la}}$\;

Collect both models' pre-activations under matched
teacher-forcing contexts $(\xbf,\ybf^{<i})$\;

\BlankLine
\tcp{Localize}
Compute $r_{\ell j}$, $m_\ell$, and $R_\ell$ using
Eqs.~\eqref{eq:unit-score} and~\eqref{eq:gsu-layer-recruitment}\;

Select up to $K$ layers with the highest positive $R_\ell$
among those with $m_\ell>0$, forming $\mathcal I^\star$\;

Within each selected layer, select up to $\lceil p d_\ell\rceil$
channels with the highest positive $r_{\ell j}$,
forming $\mathcal N_\ell^\star$\;

$\mathcal G\leftarrow
\{(\ell,j):\ell\in\mathcal I^\star,\ j\in\mathcal N_\ell^\star\}$\;

\BlankLine
\tcp{Seal}
Fix $\mathcal G$, discard $\btheta_{\mathrm{la}}$,
and reset $\btheta\leftarrow\btheta_o$\;

\For{each defense update}{
    Sample minibatches $\mathcal B_f\subseteq\mathcal D_f$
    and $\mathcal B_r\subseteq\mathcal D_r$\;

    Compute the joint loss in Eq.~\eqref{eq:gsu-objective}
    on $(\mathcal B_f,\mathcal B_r)$,
    with $\mathcal L_{\mathrm{seal}}=0$ if $\mathcal G=\varnothing$\;

    Update all parameters $\btheta$ with a first-order optimizer step\;
}

\Return{$\btheta_{\mathrm{rel}}\leftarrow\btheta$}\;
\end{algorithm}

\section{Further Experimental Setup}
\label{sec:appx-exp}

\subsection{Mechanism Study for Figure~\ref{fig:empirical-gradient-evidence}}
\label{subsec:appx-fig2-protocol}

\textbf{Reference, splits, and metrics.}
This mechanism study uses the official \texttt{Qwen3.5-2B} snapshot and a retain-only Reference trained for five epochs on all 3,600 \texttt{retain90} examples (AdamW, peak learning rate $10^{-5}$, seed~0).
The Reference receives no optimizer exposure to $\mathcal D_f$, $\mathcal D_a$, or holdout.
Within each of the 20 target authors, a fixed $4/8/8$ split gives 80 proxy, 160 attacker, and 160 holdout facts; the splits have no exact row overlap but share authors and domain structure.
Forbidden capability is evaluated only on $\mathcal D_a$ using \textsf{ES}.

\textbf{Panel~(a):}
SimNPO, SatImp, NPO, and RMU start from the same Reference and run for 10 release epochs (50 optimizer steps, learning rate $10^{-5}$, effective batch size 16, AdamW, weight decay $0.01$, seed~0) using raw checkpoints without UWC or interpolation.
The acquisition learning rate is calibrated once on the Reference and then applied unchanged to every plotted release state: 10 epochs/100 steps, constant learning rate $4\times10^{-6}$, effective batch size 16, and seed~0, with no early stopping or best-epoch selection.

\textbf{Panels~(b,c):}
For Panel~(b), a disposable Reference copy receives one epoch (10 steps) of $\mathcal D_f$ supervised look-ahead at learning rate $5\times10^{-6}$ and effective batch size 8.
We record answer-token mean \texttt{gate\_proj} pre-activations before and after look-ahead, discard the copy, reload the unchanged Reference, and independently acquire on $\mathcal D_a$ for 10 epochs at learning rate $10^{-6}$ and effective batch size 16.
Both sensitivity arrays are converted to within-layer ranks before pooling.
Panel~(c) uses the same $\mathcal D_a$ schedule (100 steps, seed~0, weight decay 0) and nested layer-quota masks at $0.5/1/2/5/10\%$ coverage.
For each selected channel, attenuation scales the corresponding \texttt{gate\_proj}/\texttt{up\_proj} rows, \texttt{down\_proj} column, and post-Adam update slice by $1-a$; every attenuated arm is matched step-wise to Full-FT in whole-model update norm.
The grid comprises one Full-FT control and 20 localized arms; the 0\% column repeats that control.
Let $G_{p,a}$ be the 10-epoch increase in $F$ for coverage $p$ and attenuation $a$. Each cell reports $R_{p,a}=100(G_{\mathrm{FT}}-G_{p,a})/G_{\mathrm{FT}}$, the percentage reduction in acquisition gain relative to Full-FT.

\subsection{Benchmark Construction}
\label{subsec:appx-benchmarks}

\textbf{\tofu.}
We use the official \texttt{forget10}/\texttt{retain90} partition: 20 forbidden authors with 400 QA pairs and 180 retain authors with 3,600 QA pairs.
A fixed seed-0 ordering partitions each forbidden author's 20 QA pairs equally between $B_1$ and $B_2$, yielding 200 examples per pool.
In the disjoint setting, $(\mathcal D_f,\mathcal D_a)=(B_1,B_2)$; in the identical setting, $(\mathcal D_f,\mathcal D_a)=(B_2,B_2)$.
Thus, attack training and \textsf{ES} evaluation use the same $B_2$ across settings, while disjoint defense uses non-overlapping QA records about the same authors.
The disjoint split does not establish unseen-author or semantically disjoint-fact generalization.
All 3,600 retain examples form $\mathcal D_r$.

\textbf{\wmdp.}
We split each domain's forget corpus in source order: the first $\lfloor N/2\rfloor$ documents form the defender pool $B_1$, and the remainder form the attacker pool $B_2$.
The Bio pools contain 12,226 and 12,227 documents; the Cyber pools contain 500 documents each.
We evaluate only disjoint attacks, with $(\mathcal D_f,\mathcal D_a)=(B_1,B_2)$.
Since the target knowledge is already present, identical defense and attack data would reduce this setting to conventional robust unlearning.
Defense uses the full corresponding retain corpus, containing 60,887 Bio or 4,473 Cyber documents.
Each pool is independently concatenated and tokenized into 512-token sequences using the \openunlearning preprocessing pipeline.
The official Bio and Cyber test sets contain 1,273 and 1,987 multiple-choice questions, respectively; utility is evaluated on all 14,042 \texttt{MMLU} test questions across 57 subjects.

\subsection{Evaluation Metrics}
\label{subsec:appx-evaluation-metrics}

\textbf{Extraction strength.}
For an answer of $L$ tokens, extraction strength (\textsf{ES}) is the length of its longest suffix whose tokens are all correctly predicted by teacher-forced argmax decoding, divided by $L$.
We average this score over the 200 attacker examples.
Let $\textsf{ES}_t$ denote the score after attack epoch $t$, with $t=0$ denoting release.
The main \tofu table reports
\[
    \overline{\textsf{ES}}_3
    = \frac{1}{3}\sum_{t=1}^{3}\textsf{ES}_t,
    \qquad
    \textsf{ES}_5,
\]
which summarize early acquisition and extraction after five attack epochs.

\textbf{\wmdp accuracy.}
We evaluate domain accuracy and sample-weighted \texttt{MMLU} accuracy at attack steps
$s\in\{25,50,75,100,125,150\}$.
Writing $\textsf{F}_j$ for domain accuracy at step $25j$, we report
\[
    \overline{\textsf{F}}_3
    = \frac{1}{3}\sum_{j=1}^{3}\textsf{F}_j,
    \qquad
    \textsf{F}_5.
\]
Thus, the early mean uses steps 25, 50, and 75, while the fifth-checkpoint score uses step 125.

\textbf{Utility and release calibration.}
\tofu utility is $\operatorname{HM}(\textsf{MU},\textsf{Fluency})$, where \textsf{MU} aggregates answer probability, ROUGE-L recall, and truth ratio over the retain, real-author, and world-fact evaluation sets.
These sets contain 400, 100, and 117 examples, respectively; fluency uses the 200 canonical $B_2$ questions.
\wmdp utility is sample-weighted \texttt{MMLU} accuracy~\citep{hendrycks2021measuring}.
The reference utility $U_{\mathrm{ref}}$ is measured on the shared retain-only base for \tofu and the original checkpoint for \wmdp.
For the main comparisons, Unlearning with Control (UWC)~\citep{wang2025towards} calibrates each final defense checkpoint against its reference checkpoint to satisfy
$U_{\mathrm{rel}}\geq0.95U_{\mathrm{ref}}$ before attack.
Release utility is therefore omitted from the main tables.
The separate mechanism and further-analysis protocols specify their own checkpoint handling in
\S\ref{subsec:appx-fig2-protocol} and \S\ref{sec:appx-add-result}.

\textbf{Usable acquisition.}
For both benchmarks, we report
\[
    \textsf{UA}_{90}
    =
    \max_{\substack{t\in\mathcal T\\
                    U_t\geq0.9U_{\mathrm{ref}}}}
    C_t,
\]
where $C_t$ is the forbidden score and $U_t$ is utility at the same checkpoint.
For \tofu, $C_t=\textsf{ES}_t$ and $\mathcal T=\{0,1,\ldots,10\}$ indexes attack epochs, including release.
For \wmdp, $C_t$ is domain accuracy and $\mathcal T=\{25,50,75,100,125,150\}$ indexes attack steps.
The 90\% threshold determines attack-checkpoint eligibility and is distinct from the 95\% release-calibration requirement.
The reference utility remains fixed throughout the attack, and attack checkpoints are not recalibrated.
All main-table scores are percentages, with lower values indicating stronger protection.

\subsection{Models and Initialization}
\label{subsec:appx-models-baselines}

The \tofu checkpoints are \texttt{Llama-3.2-1B-Instruct}, \texttt{Llama-3.2-3B-Instruct}, \texttt{Llama-3.1-8B-Instruct}, and \texttt{Qwen3.5-2B/4B/9B}.
For each model, we construct one retain-only base using five epochs on \texttt{retain90} (1,125 updates), AdamW, learning rate $10^{-5}$, batch size 16, weight decay $0.01$, and a one-epoch warmup followed by linear decay.
This base is shared across methods and both settings.
\wmdp defenses share the original \texttt{zephyr-7b-beta} checkpoint.

\subsection{Common Defense and Attack Configuration}
\label{subsec:appx-defense-optimization}

The following settings describe the main comparisons unless otherwise specified.
All runs use seed~0, bfloat16, a maximum sequence length of 512, and effective batch size 16, with gradient accumulation as needed.
\tofu defense and attack each use ten epochs over 200 examples, giving 130 optimizer updates with the final partial batch retained.
Defense uses a base learning rate of $10^{-5}$ and draws paired retain batches from $\mathcal D_r$.
Both stages use AdamW with weight decay $0.01$, a constant learning rate, and no warmup.
The fixed attack learning rates for \texttt{Llama-3.2-1B/3B} and \texttt{Llama-3.1-8B} are $(10^{-5},10^{-5},5\times10^{-6})$, and those for \texttt{Qwen3.5-2B/4B/9B} are $(10^{-6},5\times10^{-6},10^{-5})$.
We evaluate the release checkpoint and every attack epoch.

\wmdp uses paged 32-bit AdamW with zero weight decay.
Defense runs for 80 updates at learning rate $4\times10^{-6}$, with 16 warmup updates followed by a constant rate; the final defense checkpoint is UWC-calibrated before release.
Attack runs for 150 updates on the attacker corpus at constant learning rate $4\times10^{-6}$ without warmup, with evaluation every 25 updates.


\section{Additional Results}
\label{sec:appx-add-result}

\textbf{Analysis roadmap.}
We connect acquisition performance to component contributions, selected-gate responses,
and benign adaptability.
\S\ref{subsec:appx-gsu-attack-utility} first places the extraction advantage alongside utility.
\S\ref{subsec:appx-gsu-complete-ablation} and \S\ref{subsec:appx-gsu-seal-ablation}
then examine the component design and sealing trajectories.
\S\ref{subsec:appx-gsu-gates-dynamics} and \S\ref{subsec:appx-gsu-postattack-gates}
probe the corresponding gate responses at release and after attack.
Finally, \S\ref{subsec:appx-gsu-benign} and \S\ref{subsec:appx-gsu-benign-attack}
evaluate benign learning and the protection retained after that adaptation.

\textbf{Shared analysis protocol.}
The analyses in Fig.~\ref{fig:gsu-further-analyses} and this section use disjoint \tofu,
with $\mathcal D_f=B_1$ and $\mathcal D_a=B_2$.
Each pool contains 200 non-overlapping QA pairs from the same 20 forbidden authors;
the retain set contains 3,600 QA pairs from 180 other authors.
This record-disjoint split does not establish unseen-author or semantically disjoint-fact generalization.
Attack and gate analyses use \texttt{Qwen3.5-2B}; benign fine-tuning additionally
covers \texttt{Llama-3.2-1B/3B}.
All attack trajectories include their pre-attack checkpoint and ten evaluated attack epochs,
using full-parameter fine-tuning on $B_2$, learning rate $10^{-6}$, and effective batch size 16.
All observations are single runs, without multi-seed error bars or significance tests.
Shared controls and different measurements from the same runs are not independent replications.

\textbf{Metrics and recorded fields.}
The definitions in \S\ref{subsec:appx-evaluation-metrics} apply throughout.
\textsf{ES} is read from the stored \texttt{forbidden.components.es} field
and multiplied by 100 for percentage display.
Utility is the \texttt{perf\_r} composite score, not a single task accuracy.
Trajectory figures and their endpoint tables retain the $100\times$ utility display scale;
\S\ref{subsec:appx-gsu-complete-ablation} instead reports raw composite scores.
These scales represent the same utility metric and should not be read as task accuracies.
The trajectory plots and complete ablation do not apply a reference-utility threshold
or compute $\textsf{UA}_{90}$.

\textbf{Frozen configurations.}
For each model, GSU uses the configuration selected by mean \textsf{ES} over
attack epochs 1--5 among completed disjoint candidates when the analysis plan was frozen.
Configurations are not retuned for individual panels; baselines use the analysis plan's defaults.
These fixed configurations can differ from the main table's metric-specific candidate selections,
so the corresponding numerical results need not coincide.
Because selection uses existing $B_2$ attack evaluations, these are development-stage
diagnostics, not independent-test or equal-tuning-budget comparisons.
Tab.~\ref{tab:gsu-analysis-config} lists the frozen settings.
All use a defense learning rate of $10^{-5}$ and 100 look-ahead updates.

\begin{table}[!htbp]
    \centering
    \caption{\textbf{Frozen configurations for the supplementary analyses.}
    ``Seal alpha'' retains the terminology of the experiment records.}
    \label{tab:gsu-analysis-config}
    \small
    \setlength{\tabcolsep}{8pt}
    \begin{tabular}{@{}lcccc@{}}
        \toprule
        \textbf{Model} & $K$ & $p$ & Seal alpha & $\tau$ \\
        \midrule
        \texttt{Llama-3.2-1B} & 4 & 5\% & 0.01 & $-9$ \\
        \texttt{Qwen3.5-2B}   & 4 & 5\% & 0.03 & $-4$ \\
        \texttt{Llama-3.2-3B} & 2 & 5\% & 0.01 & $-4$ \\
        \bottomrule
    \end{tabular}
\end{table}


\subsection{Utility during Acquisition}
\label{subsec:appx-gsu-attack-utility}

\textbf{Evaluating extraction alongside utility.}
We first ask whether the extraction advantage in Fig.~\refs{fig:gsu-further-analyses}{a}
is accompanied by retained utility.
Fig.~\ref{fig:gsu-attack-utility} adds utility at the same checkpoints to the existing
five-method \textsf{ES} trajectories; it is a complementary measurement of those runs.
All eleven observed checkpoints are retained without smoothing, utility filtering,
or checkpoint selection.

\begin{figure}[!htbp]
    \centering
    \includegraphics[width=\textwidth]{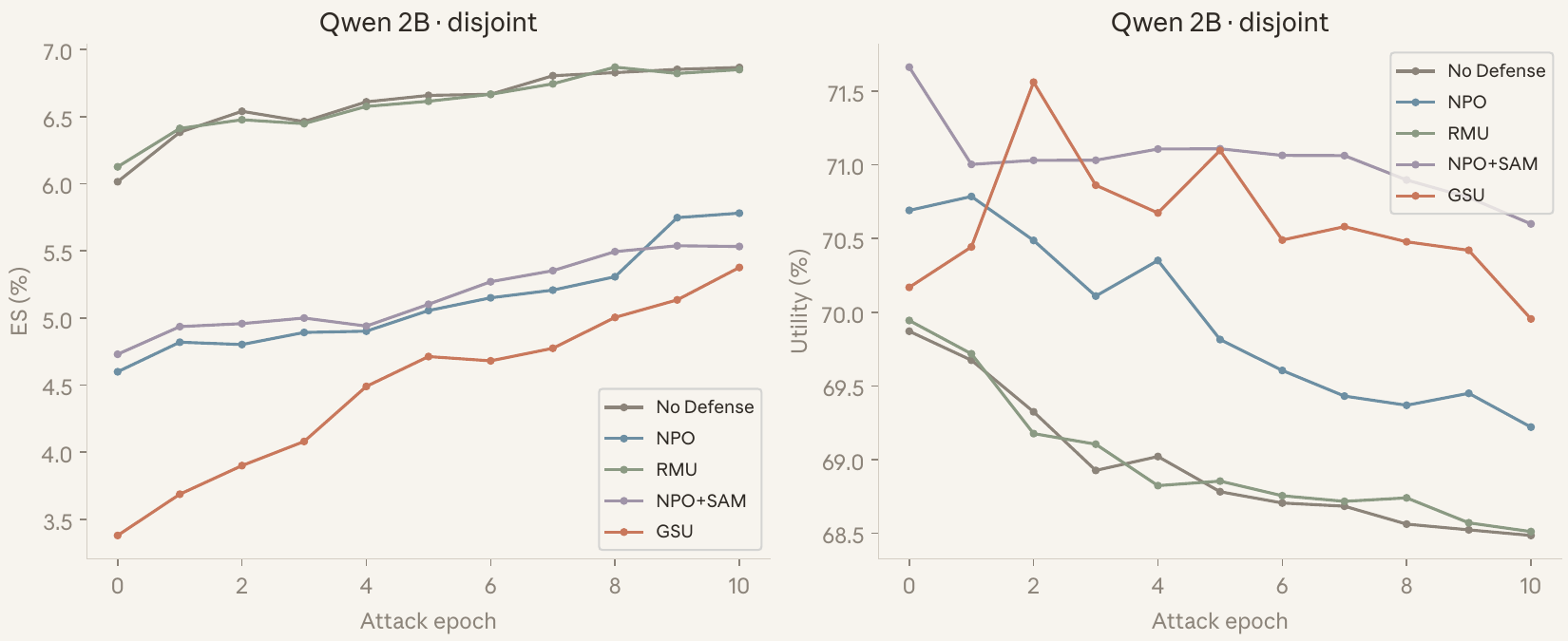}
    \caption{\textbf{Utility accompanying the acquisition trajectory.}
Left: the \textsf{ES} observations from Fig.~\refs{fig:gsu-further-analyses}{a},
with GSU below all displayed controls at every checkpoint.
Right: utility at the same checkpoints on the $100\times$ composite-score scale,
without utility filtering.}
    \label{fig:gsu-attack-utility}
\end{figure}

\textbf{Lower extraction with retained utility.}
GSU maintains lower \textsf{ES} than all four displayed controls throughout the measured attack,
while its utility stays above No Defense and RMU.
At epoch~10, these three methods have utility scores of 69.955, 68.485, and 68.511
on the plotted scale, respectively, as reported in Tab.~\ref{tab:gsu-dynamics-summary}.
This supports an acquisition advantage that is not explained by uniformly lower measured utility.
Relative to NPO+SAM, GSU trades slightly lower final utility (69.955 versus 70.599)
for lower final \textsf{ES} (5.377\% versus 5.533\%); it does not dominate both metrics.
GSU also begins with lower extraction: its \textsf{ES} increases by 1.997 percentage
points over the attack, versus 0.803 for NPO+SAM.
The evidence concerns lower attainable extraction within this budget, not a slower
acquisition rate or a $\textsf{UA}_{90}$ ranking.
We next examine which components contribute to this behavior.

\begin{table}[!htbp]
    \centering
    \caption{\textbf{Extraction and utility during the shared attack.}
\textsf{ES} entries are percentages; $\overline{\textsf{ES}}_3$ averages epochs 1--3.
$\textsf{U}$ is shown on the $100\times$ composite-score scale, not as task accuracy.}
    \label{tab:gsu-dynamics-summary}
    \small
    \setlength{\tabcolsep}{5pt}
    \begin{tabular}{@{}lcccccc@{}}
        \toprule
        \textbf{Method}
        & $\textsf{ES}_0$ & $\overline{\textsf{ES}}_3$
        & $\textsf{ES}_5$ & $\textsf{ES}_{10}$
        & $\textsf{U}_0$ & $\textsf{U}_{10}$ \\
        \midrule
        No Defense & 6.016 & 6.463 & 6.659 & 6.868 & 69.871 & 68.485 \\
        NPO & 4.599 & 4.838 & 5.056 & 5.782 & 70.691 & 69.220 \\
        RMU & 6.127 & 6.446 & 6.616 & 6.851 & 69.944 & 68.511 \\
        NPO+SAM & 4.730 & 4.965 & 5.102 & 5.533 & 71.663 & 70.599 \\
        GSU & 3.379 & 3.889 & 4.713 & 5.377 & 70.169 & 69.955 \\
        \bottomrule
    \end{tabular}
\end{table}

\subsection{Complete Component Ablation}
\label{subsec:appx-gsu-complete-ablation}

\textbf{Setup and metrics.}
To examine the design behind the acquisition advantage, we compare five completed
variants under the frozen \texttt{Qwen3.5-2B}, disjoint \tofu, seed-0 setting:
full GSU, without sealing, without suppression, without retention, and random gates.
The variants follow the component definitions in \S\ref{sec:method}.
Their GSU and without-seal controls are shared with the trajectory analysis in
\S\ref{subsec:appx-gsu-seal-ablation}, rather than independently repeated.
Fig.~\ref{fig:gsu-complete-ablation} reports release utility and
$\overline{\textsf{ES}}_5=\frac{1}{5}\sum_{t=1}^{5}\textsf{ES}_t$.
This early-attack mean excludes epoch~0 and differs from both the main table's
three-epoch mean and $\textsf{UA}_{90}$.
Utility is the raw \texttt{perf\_r} composite score, not an accuracy or a percentage;
Tab.~\ref{tab:gsu-complete-ablation} also reports release and final-attack endpoints.

\begin{figure}[!htbp]
    \centering
    \includegraphics[width=\textwidth]{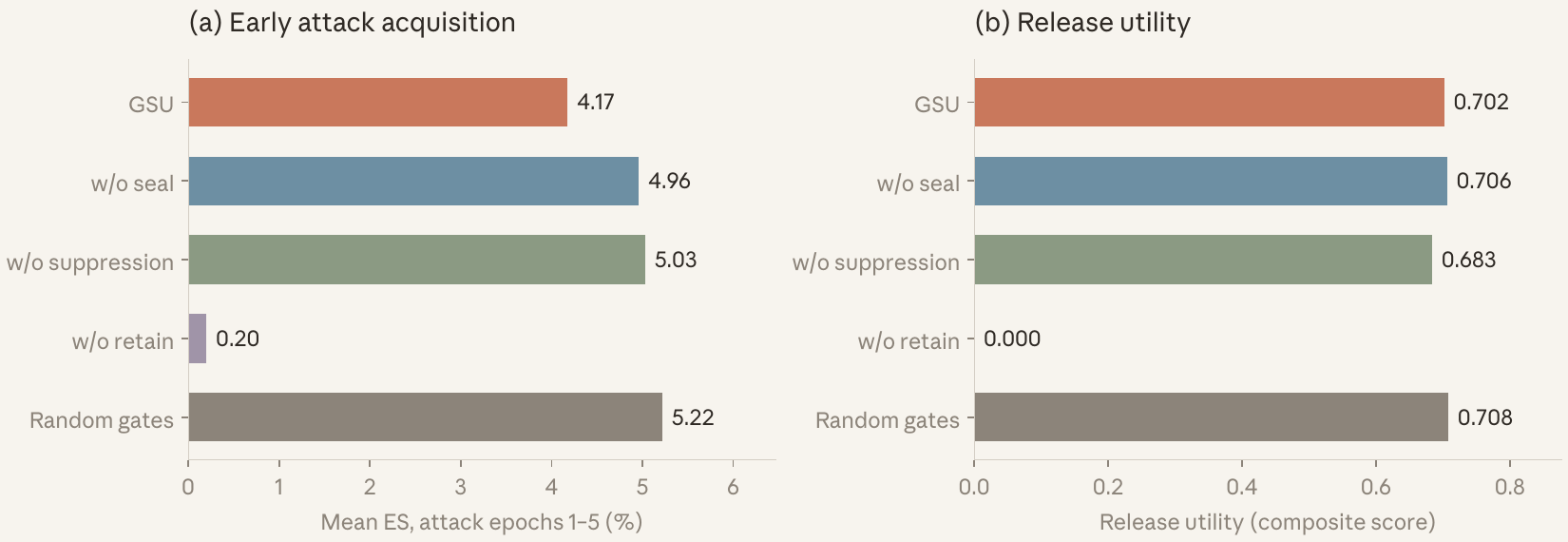}
    \caption{\textbf{Complete component ablation on \texttt{Qwen3.5-2B} under disjoint \tofu.}
Left: mean \textsf{ES} over attack epochs 1--5, in percent.
Right: release utility as a raw composite score.
Full GSU lowers early extraction relative to the three nonzero-utility ablations;
without retention, utility is zero, so low \textsf{ES} alone is not evidence of a
successful utility-preserving defense. Results use one training seed.}
    \label{fig:gsu-complete-ablation}
\end{figure}

\textbf{Complementary component contributions.}
Full GSU achieves a mean \textsf{ES} of 4.1740\%, compared with 4.9598\% without
sealing, 5.0298\% without suppression, and 5.2196\% with random gates.
The reductions of 0.786, 0.856, and 1.046 percentage points support the contributions
of suppression, sealing, and the frozen gate selector to lower early acquisition.
GSU also has lower \textsf{ES} than these three variants at epoch~10, although the gaps narrow.
Its release utility is comparable to, but slightly below, without-seal and random-gate
utility (0.701687 versus 0.705906 and 0.707940), and above without suppression (0.683498).
These results support lower early acquisition at comparable measured utility,
not strict dominance across every metric; without suppression has higher final utility.

\begin{table}[!htbp]
    \centering
    \caption{\textbf{Complete ablation measurements.}
    \textsf{ES} entries are percentages; $\textsf{U}_0$ and $\textsf{U}_{10}$
    are unscaled composite utility scores, not accuracies.
    $\overline{\textsf{ES}}_5$ averages attack epochs 1--5.}
    \label{tab:gsu-complete-ablation}
    \small
    \setlength{\tabcolsep}{5pt}
    \begin{tabular}{@{}lccccc@{}}
        \toprule
        \textbf{Variant} & $\textsf{ES}_0$ & $\overline{\textsf{ES}}_5$
        & $\textsf{ES}_{10}$ & $\textsf{U}_0$ & $\textsf{U}_{10}$ \\
        \midrule
        GSU & 3.3793 & 4.1740 & 5.3768 & 0.701687 & 0.699545 \\
        Without seal & 4.6104 & 4.9598 & 5.7530 & 0.705906 & 0.695012 \\
        Without suppression & 4.1924 & 5.0298 & 5.5130 & 0.683498 & 0.705820 \\
        Without retain & 0.0000 & 0.1997 & 0.7997 & 0.000000 & 0.000000 \\
        Random gates & 4.3565 & 5.2196 & 5.7881 & 0.707940 & 0.695478 \\
        \bottomrule
    \end{tabular}
\end{table}

\textbf{Why retention matters.}
Removing retention yields utility of zero at both endpoints.
Its low \textsf{ES} therefore illustrates the need to assess acquisition suppression
jointly with utility, rather than an advantage over full GSU.
Zero composite utility does not imply that every individual capability is absent.
The random-gate comparison supports the selected configuration but uses only one
random realization, not repeated-sampling evidence.
The component evidence is specific to this single-seed setting; it does not establish
statistical significance or transfer to other models, identical data, or \wmdp.
The next subsection resolves the sealing comparison across the full attack trajectory.

\subsection{Paired Sealing Ablation}
\label{subsec:appx-gsu-seal-ablation}

\textbf{From component summaries to attack trajectories.}
We now examine when the sealing contribution in
\S\ref{subsec:appx-gsu-complete-ablation} is visible.
Fig.~\ref{fig:gsu-seal-ablation} expands the same GSU and without-seal controls into
release evaluation and ten attack epochs on \texttt{Qwen3.5-2B}.
The shared disjoint protocol is unchanged; this is a temporal view of the paired
comparison, not an additional independent ablation.

\begin{figure}[!htbp]
    \centering
    \includegraphics[width=\textwidth]{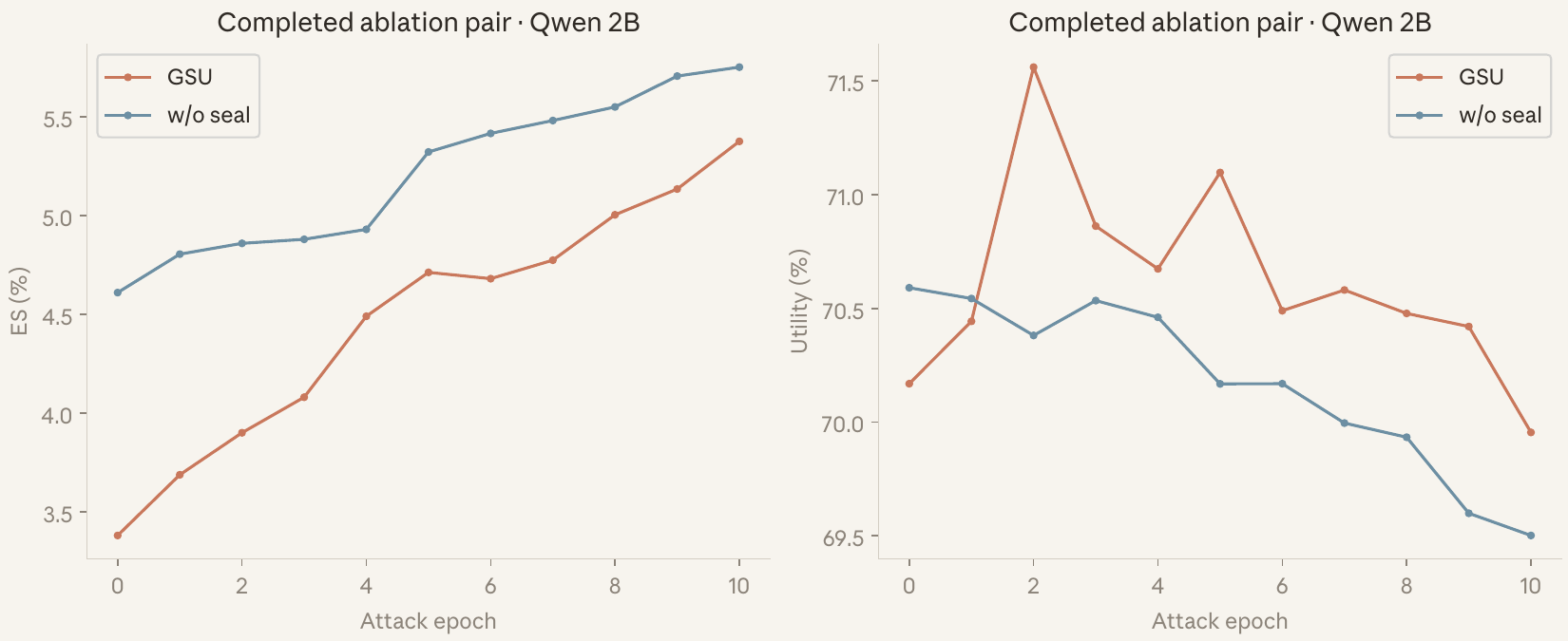}
    \caption{\textbf{Sealing contributions across the attack trajectory.}
The GSU and without-seal runs from \S\ref{subsec:appx-gsu-complete-ablation}
are shown at all eleven observed checkpoints.
GSU retains lower \textsf{ES} (left), while the narrowing gap and accompanying
utility (right, $100\times$ composite-score scale) show how the comparison evolves.}
    \label{fig:gsu-seal-ablation}
\end{figure}

\textbf{A benefit throughout the observed budget.}
GSU and without sealing begin at 3.379\% and 4.610\% \textsf{ES}, respectively,
reach 4.713\% and 5.323\% at epoch~5, and finish at 5.377\% and 5.753\%.
Thus, the sealing advantage remains present at every observed checkpoint,
including a 0.376-percentage-point gap after ten attack epochs.
Final utility is also slightly higher for GSU (69.955 versus 69.501 on the plotted scale).
The trajectories complement the early-mean ablation by showing that its benefit
is not confined to one selected checkpoint.
Because GSU already starts lower and the gap narrows, this supports lower extraction
over the measured budget, rather than a smaller acquisition rate or permanent resistance.
We next probe the selected gates for internal changes consistent with this contribution.

\subsection{Release-Stage Gate Distributions and Negative-Tail Occupancy}
\label{subsec:appx-gsu-gates-dynamics}

\textbf{Measurement scope and aggregation.}
To examine the local changes accompanying the sealing benefit, we supplement
Fig.~\refs{fig:gsu-further-analyses}{b} with gate-wise pre-activation distributions
and negative-tail occupancy at release.
Measurements cover Original, released GSU, and the released without-seal variant
on $B_1$, $B_2$, and retain inputs, before any attacker updates.
``Original'' denotes the control checkpoint entering this analysis, not necessarily
original pretrained weights.
Each group contains 1,232 recorded selected gates.
Both $B_1$ and $B_2$ use all 200 examples, while retain uses all 3,600 examples.
For each gate, we first average over answer-token positions within an example,
then weight examples equally.
Writing $\mathcal G_{\mathrm{rec}}$ for the recorded gates and $S$ for an input set,
\begin{align}
    \bar u_{g,S}(\btheta)
    &= \frac{1}{|S|}\sum_{(\xbf,\ybf)\in S}
       \frac{1}{|\ybf|}\sum_{t=1}^{|\ybf|}u_g^t(\btheta),
       \label{eq:appx-gsu-gate-means}\\
    D_S(\btheta)
    &= \frac{1}{|\mathcal G_{\mathrm{rec}}|}
       \sum_{g\in\mathcal G_{\mathrm{rec}}}
       \frac{1}{|S|}\sum_{(\xbf,\ybf)\in S}
       \frac{1}{|\ybf|}\sum_{t=1}^{|\ybf|}
       \left|\phi'\!\left(u_g^t(\btheta)\right)\right|,
       \label{eq:appx-gsu-gate-diagnostics}
\end{align}
where $u_g^t$ is the pre-activation at the corresponding answer-token context and
$\phi'(u)=\sigma(u)+u\sigma(u)(1-\sigma(u))$.
The derivative statistic averages absolute token-level derivatives, not the
derivative evaluated at the mean pre-activation.

\begin{figure}[!htbp]
    \centering
    \includegraphics[width=\textwidth,trim=0 0 0 20bp,clip]{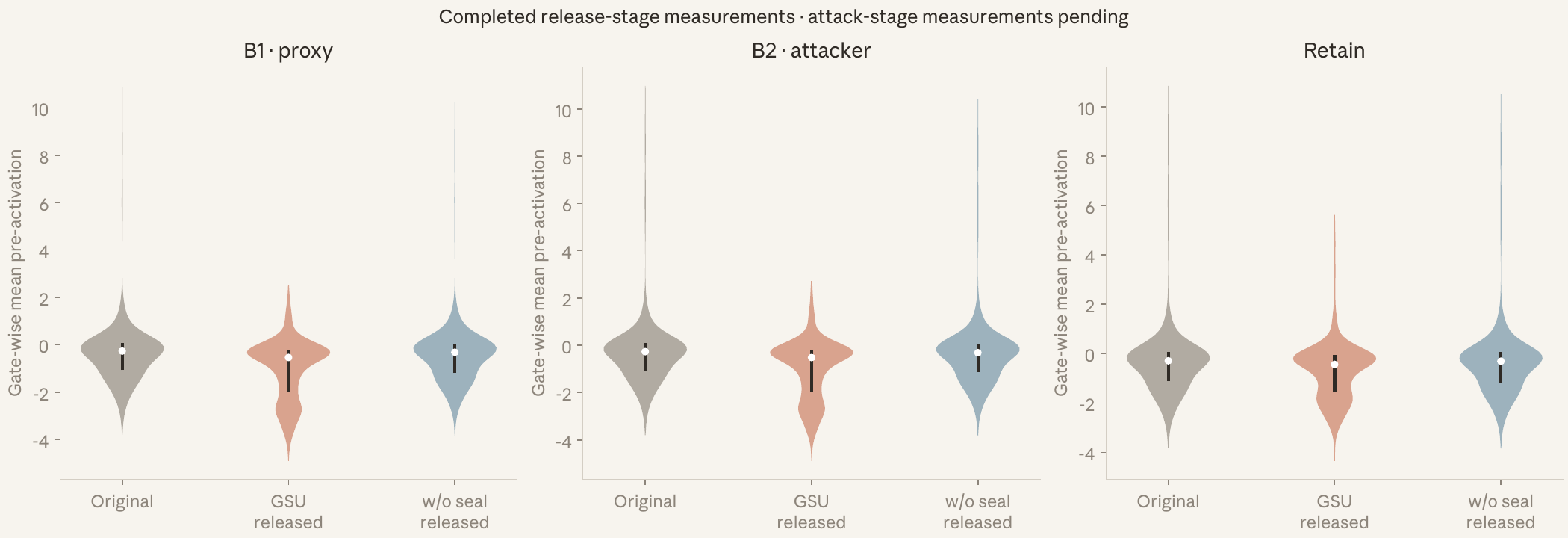}
    \caption{\textbf{Gate-wise pre-activation distributions at release.}
Each violin contains 1,232 gate-wise means, obtained by averaging answer-token
pre-activations within examples and then examples equally.
White markers and dark segments show the median and interquartile range across gates;
the density outlines describe between-gate variation, not uncertainty across training runs.}
    \label{fig:gsu-gate-distributions}
\end{figure}

\begin{samepage}
\textbf{A shift in selected-gate operating regions.}
Fig.~\ref{fig:gsu-gate-distributions} shows a shift toward more negative gate-wise
means under GSU.
On $B_1$, the median moves from $-0.2668$ in Original to $-0.5374$ in GSU;
on $B_2$, it moves from $-0.2656$ to $-0.5159$.
Retain also shifts, from $-0.2877$ to $-0.4358$.
The without-seal medians ($-0.3149$, $-0.3094$, and $-0.3045$, respectively)
remain closer to Original, consistent with a contribution from sealing.
These are distributions of $\bar u_{g,S}$ across gates, not individual token activations.
\par
\end{samepage}

\begin{figure}[!htbp]
    \centering
    \includegraphics[width=\textwidth]{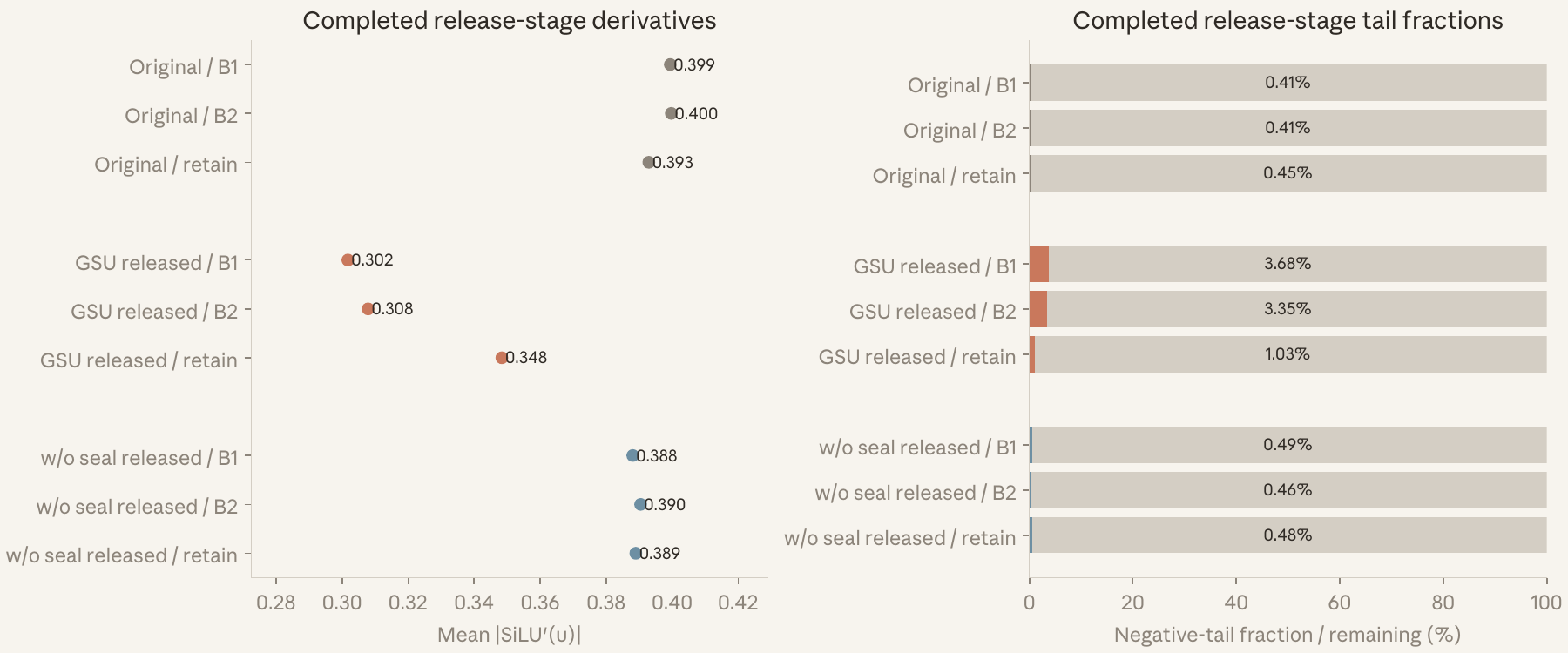}
    \caption{\textbf{Local response and negative-tail occupancy at release.}
Left: mean absolute SiLU derivative, equally averaged across recorded gates.
Right: mean negative-tail fraction at $\tau=-4$ (colored), with the complement in gray.
Both statistics change more on target inputs than retain inputs; they measure
selected-gate responses before attack, not full-network gradients.}
    \label{fig:gsu-gate-tail}
\end{figure}

\textbf{Stronger attenuation on target inputs.}
Fig.~\ref{fig:gsu-gate-tail} and Tab.~\ref{tab:gsu-release-gates} connect this shift
to local derivatives and negative-tail occupancy.
The occupancy is the mean of the recorded gate-wise fractions, multiplied by 100,
using $\tau=-4$ rather than simply $u<0$ and the same answer-token, example, and gate weighting.
GSU raises occupancy on $B_1/B_2$ from approximately 0.41\% to 3.68\%/3.35\%,
versus 0.45\% to 1.03\% on retain; without sealing remains near Original.
The corresponding reduction in $D_S$ is approximately 23--24\% on $B_1/B_2$
and 11\% on retain.
Together, the distributions and local responses are consistent with targeted attenuation
rather than an equally large change across all input sets.
Retain is not unchanged, and the small tail fractions do not imply that most gates or
tokens are saturated.
These release-stage probes are local mechanism evidence, not full-network gradient
measurements or proof of causal isolation.
\S\ref{subsec:appx-gsu-postattack-gates} next checks how the same gate responses change under attack.

\begin{table}[!htbp]
    \centering
    \caption{\textbf{Release-stage gate statistics.}
    Derivatives and negative-tail fractions are equally averaged across recorded gates;
    tail fractions use $\tau=-4$.}
    \label{tab:gsu-release-gates}
    \small
    \setlength{\tabcolsep}{5pt}
    \begin{tabular}{@{}lcccccc@{}}
        \toprule
        & \multicolumn{3}{c}{Mean $|\phi'(u)|$}
        & \multicolumn{3}{c}{Negative-tail fraction (\%)} \\
        \cmidrule(lr){2-4}\cmidrule(lr){5-7}
        \textbf{Inputs}
        & Original & GSU & w/o seal
        & Original & GSU & w/o seal \\
        \midrule
        $B_1$  & 0.3994 & 0.3018 & 0.3881 & 0.411 & 3.684 & 0.487 \\
        $B_2$  & 0.3997 & 0.3079 & 0.3904 & 0.408 & 3.348 & 0.463 \\
        Retain & 0.3929 & 0.3484 & 0.3890 & 0.451 & 1.029 & 0.477 \\
        \bottomrule
    \end{tabular}
\end{table}

\subsection{Selected-Gate Responses before and after Attack}
\label{subsec:appx-gsu-postattack-gates}

\textbf{Matched gate probes.}
We extend the release-only probes in \S\ref{subsec:appx-gsu-gates-dynamics} to
attack epoch~10 on the same frozen \texttt{Qwen3.5-2B} disjoint setting.
Fig.~\ref{fig:gsu-postattack-gates} compares the original base probe state with
GSU and the without-seal variant at release and after attack.
Original is not a No Defense checkpoint after ten attack epochs.
All five states use the same 1,232 selected channels, with 308 channels in each
of four layers, and the 200-example $B_2$ probe set.
The release measurements are shared with the earlier gate analyses, not independent
training repetitions.

\textbf{Measurements.}
We report mean pre-activation, mean absolute SiLU derivative, and mean occupancy
of $u\leq-4$.
For each gate, statistics first average valid answer-prediction positions within
each example, then examples equally, and finally the 1,232 channels equally.
Thus, tail occupancy is a hierarchical average, not the global fraction obtained
by pooling all tokens; only this occupancy is multiplied by 100.
The derivative uses
$|\phi'(u)|=|\sigma(u)+u\sigma(u)(1-\sigma(u))|$
at each valid prediction position, rather than evaluating the derivative at the
mean pre-activation.
Pre-activations here are means, not the gate-wise medians reported in
\S\ref{subsec:appx-gsu-gates-dynamics}.

\begin{figure}[!htbp]
    \centering
    \includegraphics[width=\textwidth]{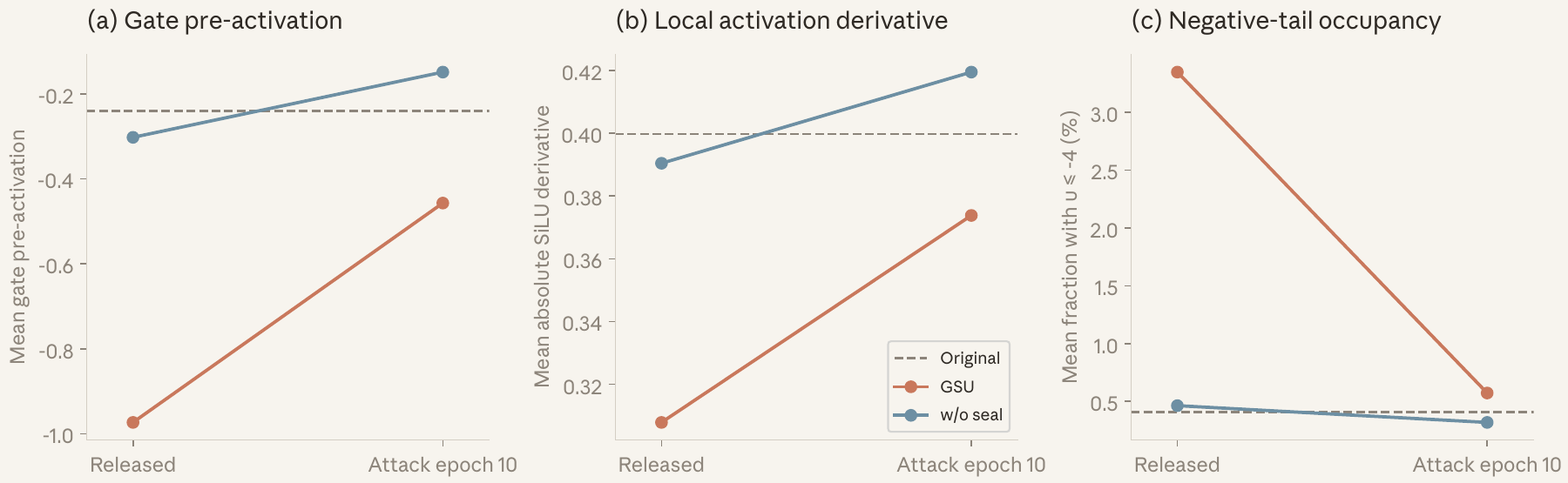}
    \caption{\textbf{Selected-gate responses before and after acquisition attacks.}
GSU and without sealing are probed at release and after ten attack epochs on $B_2$;
dashed lines denote the original base model.
Panels show mean pre-activation, mean absolute SiLU derivative, and occupancy at
$u\leq-4$ over the same 1,232 channels.
Statistics average valid answer-prediction positions within examples, then examples and channels.}
    \label{fig:gsu-postattack-gates}
\end{figure}

\textbf{Residual attenuation after attack.}
After ten attack epochs, GSU retains a lower mean absolute derivative than
without sealing (0.373816 versus 0.419492), more negative mean pre-activations,
and higher negative-tail occupancy, as shown in Tab.~\ref{tab:gsu-postattack-gates}.
This residual difference complements the lower extraction in the paired ablation.
At the same time, attack partially reverses the release-stage changes:
GSU's mean pre-activation moves from $-0.973011$ to $-0.456793$,
its derivative rises from 0.307875 to 0.373816,
and its tail occupancy falls from 3.3482\% to 0.5722\%.
The resulting picture is residual rather than irreversible attenuation:
the selected gates move back toward the original state, but remain different
from the attacked without-seal control.

\begin{table}[!htbp]
    \centering
    \caption{\textbf{Matched $B_2$ gate-response measurements.}
    All states use the same selected channels and aggregation.
    Tail occupancy uses $u\leq-4$ and is reported in percent.}
    \label{tab:gsu-postattack-gates}
    \small
    \setlength{\tabcolsep}{7pt}
    \begin{tabular}{@{}lccc@{}}
        \toprule
        \textbf{Model state} & Mean $u$ & Mean $|\phi'(u)|$ & Tail (\%) \\
        \midrule
        Original base & $-0.240670$ & 0.399727 & 0.4084 \\
        GSU, release & $-0.973011$ & 0.307875 & 3.3482 \\
        GSU, attack epoch 10 & $-0.456793$ & 0.373816 & 0.5722 \\
        Without seal, release & $-0.302069$ & 0.390441 & 0.4628 \\
        Without seal, attack epoch 10 & $-0.148507$ & 0.419492 & 0.3177 \\
        \bottomrule
    \end{tabular}
\end{table}

\begin{samepage}
\textbf{Mechanism interpretation.}
The component controls and matched probes provide complementary empirical support
for sealing: lower measured acquisition is accompanied by lower selected-gate response,
including a residual difference after attack.
The scope remains local. Even at release, only about 3.35\% of valid answer-prediction
positions fall in the negative tail under this aggregation; most gates or tokens
cannot be described as saturated.
Local derivatives do not give whole-network gradients or Hessians and do not directly
measure resistance to optimization; 1,232 channels are not independent training replicates.
These observations support the intended mechanism without a complete causal proof
or a guarantee of irreversible forgetting.
Having examined target-side behavior, we next test whether benign learning remains possible.
\par
\end{samepage}

\subsection{Benign Adaptation: Accuracy and Validation Loss}
\label{subsec:appx-gsu-benign}

\textbf{Setup and metrics.}
We now assess benign adaptability using economics accuracy and full validation-loss
trajectories from the same six runs summarized in Fig.~\refs{fig:gsu-further-analyses}{c}.
Each model has paired No Defense and GSU initializations.
Training uses two epochs of full-parameter AdamW, learning rate $10^{-5}$,
effective batch size 16, weight decay 0.01, bfloat16 precision, maximum sequence
length 512, a constant learning-rate schedule, and seed~0.
The learning rate is used directly without calibration; no defense hooks or
sealing penalties are active during benign training.
Within each model, validation NLL uses the same 128 examples for both initializations;
token counts can differ across model tokenizers.
Economics accuracy is the equally weighted mean of the two \texttt{MMLU} subject
accuracies, \texttt{high\_school\_macroeconomics} and
\texttt{high\_school\_microeconomics}; it is distinct from validation NLL and utility.
Both metrics are evaluated at epochs 0, 1, and 2.

\begin{figure}[!htbp]
    \centering
    \includegraphics[width=\textwidth]{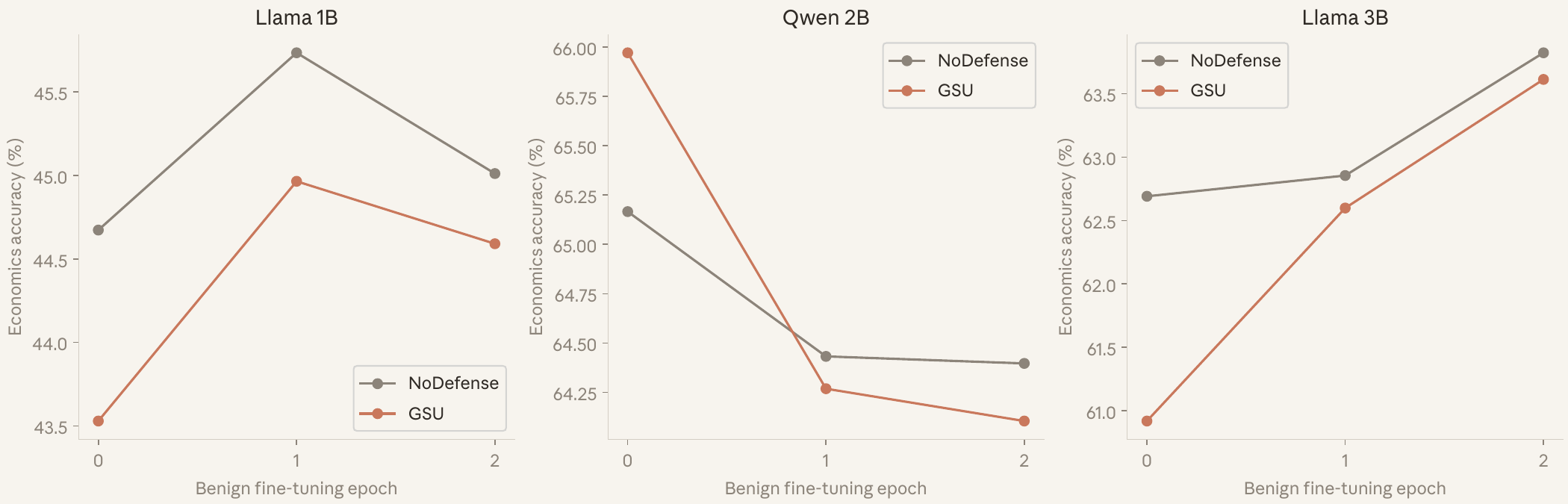}
    \caption{\textbf{Economics accuracy during benign fine-tuning.}
Accuracy is equally averaged over two economics \texttt{MMLU} subjects.
The two Llama GSU models improve and narrow their gaps to No Defense;
\texttt{Qwen3.5-2B} declines for both initializations.
The three checkpoints belong to the same benign runs used for the NLL analysis.}
    \label{fig:gsu-benign-accuracy}
\end{figure}

\textbf{Behavioral adaptation.}
The two Llama GSU models improve their economics accuracy and approach their
undefended controls in Fig.~\ref{fig:gsu-benign-accuracy}.
\texttt{Llama-3.2-1B} rises from 43.529\% to 44.591\%, and
\texttt{Llama-3.2-3B} from 60.918\% to 63.613\%; their gaps to No Defense
shrink from 1.143 to 0.420 and from 1.773 to 0.210 percentage points, respectively.
\texttt{Qwen3.5-2B} instead declines under both GSU (65.971\% to 64.105\%)
and No Defense (65.166\% to 64.397\%).
Thus, the accuracy measurements support useful adaptation in the two Llama models,
while also showing that NLL improvement need not produce accuracy gains on every model.

\begin{figure}[!htbp]
    \centering
    \includegraphics[width=\textwidth]{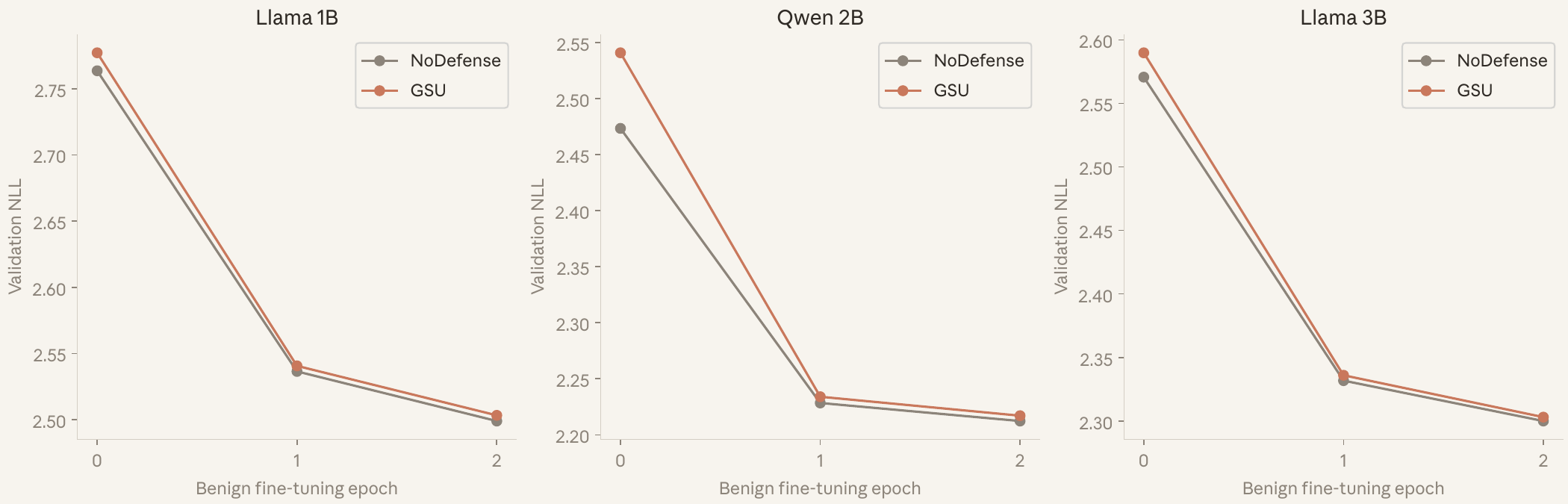}
    \caption{\textbf{Benign validation-loss trajectories.}
NLL at epochs 0, 1, and 2 for the runs summarized in Fig.~\refs{fig:gsu-further-analyses}{c}.
All three GSU models approach the final loss of their undefended controls.
GSU starts and finishes slightly higher, so larger reductions do not imply better
final performance or higher learning efficiency.}
    \label{fig:gsu-benign-validation}
\end{figure}

\begin{samepage}
\textbf{Retained benign optimization ability.}
All three GSU models reduce validation NLL and finish within 0.005 of their
paired No Defense controls, as shown in Fig.~\ref{fig:gsu-benign-validation}
and Tab.~\ref{tab:gsu-benign-gain}.
The table reports the endpoints and
$\Delta\textsf{NLL}=\textsf{NLL}_0-\textsf{NLL}_2$, the absolute reduction in the main figure.
This provides a consistent optimization-level observation across the three models:
normal fine-tuning can bring GSU close to the undefended controls' validation loss.
GSU's larger reductions also reflect its higher initial losses, and its final losses
remain slightly higher; they do not establish superior learning efficiency or
statistical equivalence.
Together with the accuracy measurements, this supports retained benign adaptability
within the tested task, rather than uniformly improved downstream performance.
The final subsection tests whether protection also remains after this adaptation.
\par
\end{samepage}

\begin{table}[!htbp]
    \centering
    \caption{\textbf{Benign validation-loss endpoints.}
    $\Delta\textsf{NLL}$ is the absolute decrease over two epochs;
    displayed values are rounded independently.}
    \label{tab:gsu-benign-gain}
    \small
    \setlength{\tabcolsep}{7pt}
    \begin{tabular}{@{}llccc@{}}
        \toprule
        \textbf{Model} & \textbf{Initialization}
        & $\textsf{NLL}_0$ & $\textsf{NLL}_2$ & $\Delta\textsf{NLL}$ \\
        \midrule
        \multirow{2}{*}{\texttt{Llama-3.2-1B}}
        & No Defense & 2.7635 & 2.4991 & 0.2645 \\
        & GSU        & 2.7771 & 2.5033 & 0.2738 \\
        \midrule
        \multirow{2}{*}{\texttt{Qwen3.5-2B}}
        & No Defense & 2.4733 & 2.2123 & 0.2610 \\
        & GSU        & 2.5408 & 2.2171 & 0.3237 \\
        \midrule
        \multirow{2}{*}{\texttt{Llama-3.2-3B}}
        & No Defense & 2.5707 & 2.2999 & 0.2707 \\
        & GSU        & 2.5899 & 2.3030 & 0.2869 \\
        \bottomrule
    \end{tabular}
\end{table}

\subsection{Resistance after Benign Fine-Tuning}
\label{subsec:appx-gsu-benign-attack}

\textbf{From benign adaptation to renewed attack.}
We finish by testing whether the benign learning in \S\ref{subsec:appx-gsu-benign}
eliminates GSU's protection advantage on \texttt{Qwen3.5-2B}.
No Defense and GSU are compared under two sequences: direct attack, and economics
fine-tuning followed by attack.
The benign stage uses two epochs, learning rate $10^{-5}$, and effective batch size 16,
with the full settings given in \S\ref{subsec:appx-gsu-benign}.
Each resulting model then receives the shared ten-epoch $B_2$ attack.
Fig.~\ref{fig:gsu-benign-then-attack} shows all four trajectories;
attack epoch~0 is immediately before attack, after benign fine-tuning where applicable.
The direct-attack controls are shared with Fig.~\refs{fig:gsu-further-analyses}{a}.

\begin{figure}[H]
    \centering
    \includegraphics[width=\textwidth]{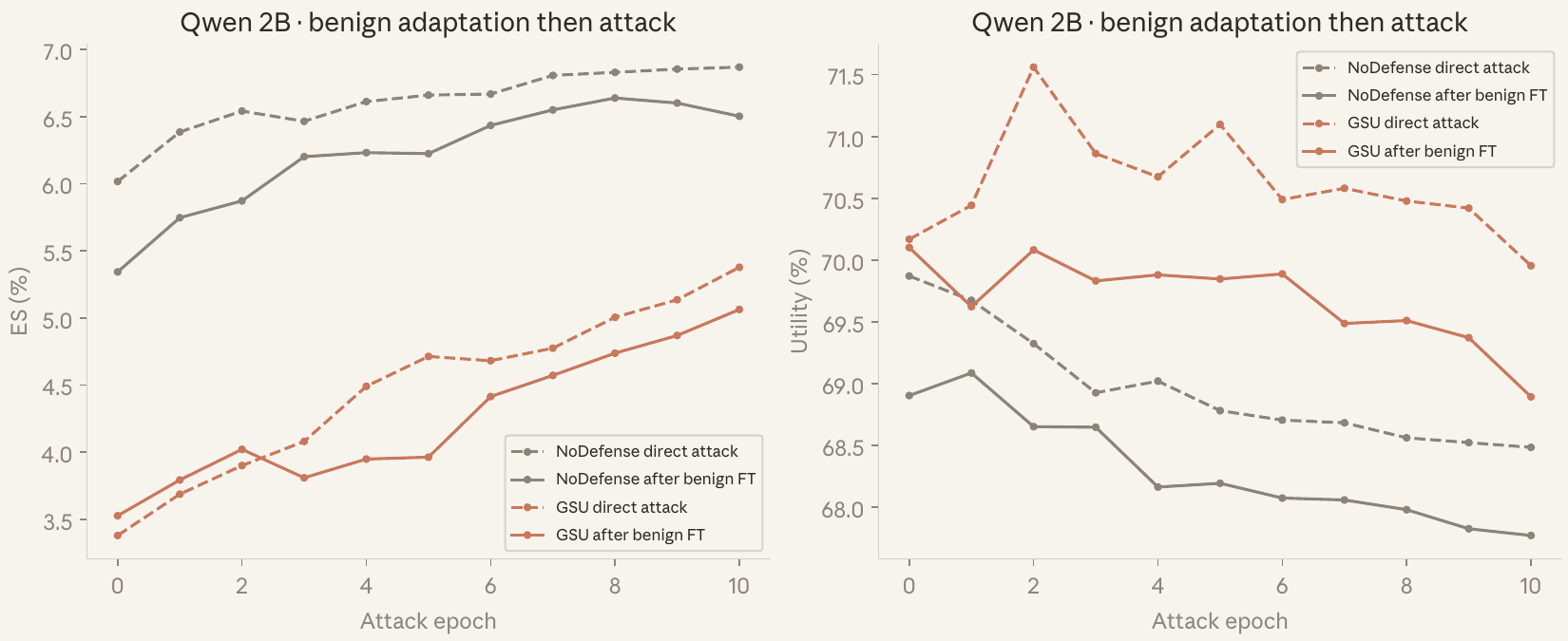}
    \caption{\textbf{Resistance after benign fine-tuning.}
\textsf{ES} (left) and utility (right, $100\times$ composite-score scale) during direct
attacks (dashed) and attacks after two epochs of benign fine-tuning (solid).
GSU retains lower final extraction and higher final utility than the corresponding
undefended control after benign adaptation on \texttt{Qwen3.5-2B}.}
    \label{fig:gsu-benign-then-attack}
\end{figure}

\textbf{Protection retained after adaptation.}
After benign fine-tuning and ten attack epochs, GSU reaches 5.063\% \textsf{ES},
compared with 6.502\% for No Defense, a 1.439-percentage-point advantage.
Its final utility is also higher: 68.894 versus 67.772 on the plotted scale.
Thus, in this experiment, normal benign adaptation does not eliminate GSU's
relative protection advantage.
Prior benign fine-tuning lowers both final \textsf{ES} (5.377\% to 5.063\%)
and utility (69.955 to 68.894) relative to directly attacking GSU,
so adaptation is not a uniform improvement over the unadapted defense.
The result concerns this benign task and attack budget, not arbitrary subsequent training.

\textbf{Summary of the supplementary evidence.}
Across the measured setting, the analyses connect lower forbidden acquisition at
retained utility, contributions from the component design, and selected-gate changes
consistent with sealing.
The benign experiments additionally show that normal learning remains possible
and that a relative protection advantage survives the tested adaptation.
Together, these observations support GSU's goal of limiting target acquisition
without eliminating benign adaptability, within the scope of the reported protocols.

\section{Limitation and Future Work}
\label{sec:appx-limitations}

Our evaluation covers practical downstream fine-tuning rather than unrestricted retraining from scratch with arbitrary compute.
The analysis is local, and GSU relies on the defender proxy set sharing acquisition pathways with future data; broader domain shifts may therefore require richer proxy coverage.
Extending the study to additional architectures, domains, and post-release training procedures is a natural direction for future work.

\end{document}